\newtheorem{theorem}{Theorem}
\newtheorem{corollary}{Corollary}
\newtheorem{example}{Example}
\newtheorem{lemma}{Lemma}
\begin{document}

\title{Cooperation for Scalable Supervision\\of Autonomy in Mixed Traffic}

\author{Cameron Hickert,~\IEEEmembership{Member,~IEEE,} Sirui Li,~\IEEEmembership{Member,~IEEE,} Cathy Wu,~\IEEEmembership{Member,~IEEE}
\thanks{Cameron Hickert is with the Institute for Data, Systems, and Society, Massachusetts Institute of Technology,
        Cambridge, MA 02139, USA.
        {\tt\small chickert@mit.edu}}%
\thanks{Sirui Li is with the Institute for Data, Systems, and Society, Massachusetts Institute of Technology,
        Cambridge, MA 02139, USA.
        {\tt\small siruil@mit.edu}}%
\thanks{Cathy Wu is with the Laboratory for Information \& Decision Systems; the Institute for Data, Systems, and Society; and the Department of Civil and Environmental Engineering, Massachusetts Institute of Technology,
        Cambridge, MA 02139, USA.
        {\tt\small cathywu@mit.edu}}%
}
\markboth{IEEE TRANSACTIONS ON ROBOTICS, April~2023}{}

\IEEEpubid{10.1109/TRO.2023.3262120~\copyright~2023 IEEE}

\maketitle

\begin{abstract}
Advances in autonomy offer the potential for dramatic positive outcomes in a number of domains, yet enabling their safe deployment remains \textcolor{black}{an open problem}.
\textcolor{black}{This work's motivating question is: In safety-critical settings, can we avoid the need to have one human supervise one machine at all times? The work formalizes this \textit{scalable supervision} problem by considering remotely-located human supervisors and investigates how autonomous agents can cooperate to achieve safety.} 
 The paper focuses on the safety-critical context of autonomous vehicles (AVs) merging into traffic consisting of a mixture of AVs and human drivers.
 \textcolor{black}{The analysis establishes high reliability upper bounds on human supervision requirements. It further shows that AV cooperation can improve supervision reliability by \textit{orders of magnitude} and counterintuitively requires fewer supervisors (per AV) as more AVs are adopted.}
 \textcolor{black}{These analytical results leverage queuing-theoretic analysis, order statistics, and a conservative, reachability-based approach.}
\textcolor{black}{A key takeaway is the potential value of cooperation in enabling the deployment of autonomy at scale.}
While this work \textcolor{black}{focuses on} AVs, the scalable supervision framework \textcolor{black}{may be of independent interest} to a broader array of autonomous control challenges. 
\end{abstract}

\begin{IEEEkeywords}
Scalable supervision, intelligent transportation systems, human factors and human-in-the-loop, autonomous agents.
\end{IEEEkeywords}

\section{Introduction}
\IEEEPARstart{G}{iven} the complexity, chaos, and unpredictability of real-world environments, safety is a critical challenge for autonomous systems.
This is particularly the case for mixed autonomy systems, which refer to systems in which humans and machines both exhibit control in the same environment~\cite{wu2018learning}.
Indeed, in an Allianz Global Assistance survey investigating decreasing interest in self-driving cars (one of the most-discussed mixed autonomy systems today), over 70\% of respondents cited safety as a reason for their lack of interest, representing increased concern from the year prior~\cite{Allianz2018}.
Thus, improving safety in mixed autonomy systems is not only an end in itself, but also is a prerequisite for unlocking the benefits that such systems may offer.

\begin{figure} [t]
    \centering
    \includegraphics[width=0.48\textwidth]{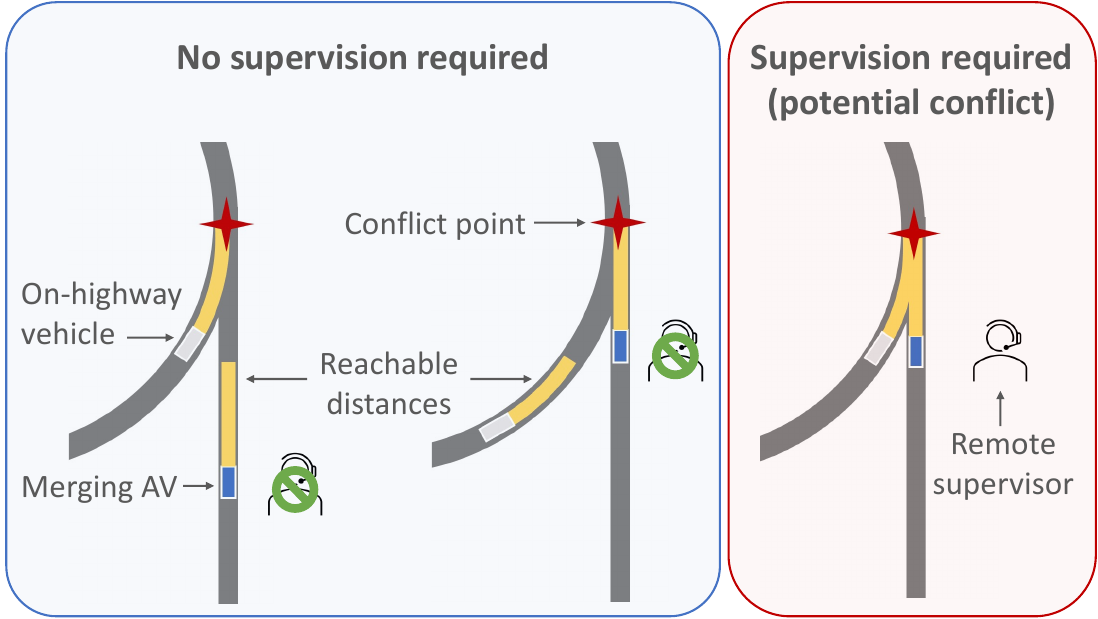}
    \caption{A general illustration of the reachability conditions that would activate a human supervisor. The yellow rectangular zones represent the reachable distances over time horizon $t$ for each vehicle. When the merge point (marked by the red cross) is within both vehicles' reachable zones, there is a possibility of an accident in the time horizon and the supervisor activates.
    }
    \label{fig:reach_conditions}
\end{figure}

The question becomes: How can we achieve \textcolor{black}{sufficient levels of safety} and performance even when formal guarantees are elusive in a chaotic world? 
\textcolor{black}{To investigate this topic, our research borrows the idea of remote supervision from air traffic control, in which a centralized controller coordinates agents for safety. }

\subsection{Contributions}
\IEEEpubidadjcol
This work provides one perspective on the topic of improving safety in mixed autonomy settings by investigating the online human supervision of AVs in an illustrative merging task.
To this end, we present a number of contributions that---to the best of our knowledge---have not been addressed in previous work.
\begin{enumerate}
    \item We formalize the \textit{scaling supervision} problem for online autonomous agents and propose a `scalability' metric: expected number of AVs per supervisor. \textcolor{black}{In the process, we propose the first (to our knowledge) large-scale remote supervision and control framework for AVs and present queuing-theoretic and statistical tools for analysis of such systems.}
    \item We propose a reachability analysis-based method for AV supervision. We provide a high \textcolor{black}{reliability} upper bound on the expected supervision time required for a supervisor to monitor a given AV's merge, as well as a closed-form expression for the probability that the number of supervisors is insufficient when merging tasks are pooled together and distributed across multiple human supervisors. 
    \item Leveraging order statistics, we show that---interestingly---cooperation of AVs enables supervision time \textit{inversely} related to AV adoption \textcolor{black}{(i.e., the proportion of vehicles in a traffic network that are AVs)\textcolor{black}{, as well as orders-of-magnitude improvement in supervision requirements. An implication of this result is that cooperation substantially facilitates safe autonomous deployments at scale.}} 
\end{enumerate}

\textcolor{black}{These outcomes suggest the adage, ``An ounce of prevention is worth a pound of cure," applies here. That is, preventing the need for supervision is more powerful than having more supervisors.} 

\section{Background}
\begin{figure} [t]
    \centering
    \includegraphics[width=0.37\textwidth]{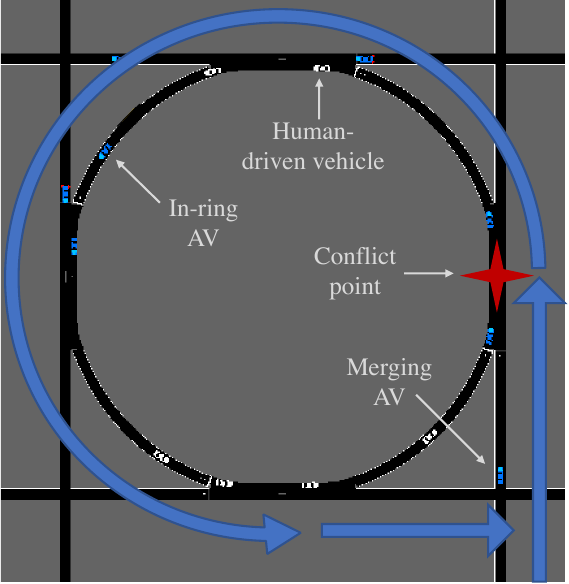}
    \caption{An illustrative vehicle merging scenario. 
    \textcolor{black}{A merging AV's path is indicated by the blue arrows; it enters via an on‐ramp on the right side of the circular highway and exits via the off‐ramp at the bottom. 
    This AV's merge point is indicated with the red marker. Human vehicles (HVs) are shown in grey.
    AVs are shown in blue.
    (Not to scale; modified for visibility.)} 
    }
    \label{fig:merge_schematic}
\end{figure}

\subsection{Motivation}
\paragraph{Safety in AVs}
The question of AV safety is an open and well-documented problem~\cite{bila2016vehicles, nascimento2019systematic}, particularly in mixed autonomy traffic settings~\cite{arbabzadeh2017data, vanholme2012highly}.
Thus, the importance of safety in any deployment of AVs is sufficient justification for investigating supervision of autonomous agents in itself.
\textcolor{black}{Human supervision of autonomous and semi-autonomous agents has been shown to improve system safety in other settings and is the de facto safety backup method for driver-assist systems on the road today~\cite{saunders2017trial, chen2014human, ziebinski2017review}.} 
If AVs replaced even a modest portion of US vehicles, the number of lives depending on AV safety systems could number in the thousands~\cite{FHA2019}.

Moreover, AVs are not the only autonomous system with the potential to inflict human harm. 
The authors anticipate that the approach and lessons learned in this research can inspire future work to improve the safety and performance of autonomous agents in settings beyond society's roadways, such as in manufacturing and other cyber-physical systems. 

\textcolor{black}{This work adopts an operational reliability perspective on safety.
That is, the proposed method does not guarantee crash-free behavior, but rather focuses on identifying the resources necessary to achieve human-level safety with an arbitrary desired level of reliability.
This is achieved via human supervision of the autonomous agents and thus the system's ultimate safety remains subject to human performance.
The concept of remote supervision is borrowed from air traffic control, in which a human supervisor centrally coordinates agents for safety~\cite{hopkin1988air}.}

\paragraph{AVs' Benefits to Traffic Systems}
While ensuring a sufficient level of safety is an important end in itself, doing so is foundational to the wider adoption of AVs, which can in turn bring benefits including decreased traffic congestion and traffic emissions. 
AVs can achieve outsized positive impact without dominating roadways. 
Using AV policies learned via reinforcement learning (RL), previous work shows that even when AVs account for only 5-10\% of vehicles in a traffic network, they can still boost the collective average speed of \textit{all} vehicles in the network by up to 57\% in idealized settings~\cite{wu2021flow}.
Researchers found that improving traffic flows in California can reduce highway carbon dioxide emissions by up to nearly 20\%; this is particularly significant given that transportation accounts for approximately a third of all US carbon dioxide emissions~\cite{barth2008real}.

\paragraph{Merging}
The case of AVs merging onto a highway with traffic is an appropriate task for investigating how we can scale supervision for a number of reasons. 
First, it is a common occurrence that can pose difficult control and coordination challenges for AVs~\cite{jula2000collision, zhou2016impact}.
Given the speed at which freeway merges occur, this task can result in deadly crashes.
Indeed, successful merging may sometimes rely on implicit communication between drivers or even non-traditional explicit communication, such as hand waves.
\textcolor{black}{For these reasons a `sufficient level' of safety in this paper refers to that which capable and attentive human drivers provide. This is achieved via the use of human supervision.} 

Merging also presents unique challenges in mixed autonomy settings. 
Were a system to consist entirely of AVs, a solution may exist via decentralized or centralized coordination, but that is much more difficult---if not impossible---for the foreseeable future, in which human drivers still populate the roadways~\cite{rios2016automated}.
Due to its relative frequency, unpredictability, and potential for impact across entire traffic systems, highway merging has been a particularly important and popular maneuver to study~\cite{chen2020hierarchical, rios2016survey, zheng2019cooperative}.

\paragraph{Other applications}

The scalable supervision framework, concepts, and formalization presented here are adaptable to other autonomous machine settings. 
Within mixed autonomy traffic environments, one could consider scalable supervision as a means of supporting dedicated AV lanes on freeways. 
These lanes have been shown in simulation to improve traffic efficiency while also worsening on-road safety~\cite{yu2019impact}.
The potential speed difference between AV-only and standard lanes present a merging problem similar to the case studied here. 
Scalable supervision would also be useful for the case of accident-prone `hotspots' (construction sites, school zones, etc.) where concerns arise about the potential behavior of AVs~\cite{erdogan2008geographical}. 
Each of these is a practical instance of the scenario studied here.

Beyond traffic scenarios, reachability-based supervision scaling that enforces safety constraints \textcolor{black}{may be} applicable to domains as varied as warehouses, ports, airports, satellite orbits, and disaster response.
\textcolor{black}{Our work can be situated among these domains along the axes of spatial scale and reliability requirements.
In robotics applications with human supervisors, the spatial scale is on the order of meters or less, but reliability requirements and response time conditions are not as stringent~\cite{hoque2022fleet, sheridan1986human}.
In human supervision of (increasingly autonomous) air and maritime traffic applications, the reliability requirements are high, but the spatial scale is on the order of tens of kilometers or more, corresponding to a larger temporal scale~\cite{kirwan2001role, van2020meaningful}. 
Our work seeks to improve supervision of autonomous agents in a direction of both high reliability (up to 99.9999\%) and meter-scale spatial resolution. 
Note that application to other domains requires computation of the relevant reachable sets, but---as we will show---conservative estimates of such sets in many of these applications may provide useful approximations that are more computationally efficient.} 

More broadly, \textcolor{black}{policies learned via RL have demonstrated superhuman performance in domains from board games and video games to high-altitude balloon control and simulated fighter jet combat~\cite{silver2017mastering, bellemare2020autonomous, hambling2020ai}.
However, deep RL methods lack the convergence guarantees of more traditional RL approaches, and thus cannot be trusted in safety-critical settings~\cite{bacci2020probabilistic, bouton2019reinforcement}.} 

\textcolor{black}{This work is similarly relevant to imitation learning, another approach to AV deployment that struggles to reliably maintain safety~\cite{le2022survey}. While progress has been made on various imitation learning challenges, in practice humans remain a fail-safe~\cite{kelly2019hg}.}

\subsection{Related Work}
This research draws from interesting and important related work in at least three areas of the literature: human-artificial intelligence (AI) teaming, supervision and cognitive models, and reachability analysis. 

\paragraph{Human-AI teaming}

Relevant work includes that from the human-robot interaction subfield in multi-agent systems, which explores how to reduce a single human operator's cognitive load from scaling linearly with the number of robots controlled. 
A significant portion of this work investigates the possibility for well-designed interfaces to reduce task burden or for learned models of human preferences to automate human-machine control handoff (also called, sliding, shared, or adjustable autonomy)~\cite{drew2021multi}.
Our work expands this effort by considering the shared autonomy situation in which strict safety guarantees must be maintained, and extends this line of work into the mixed autonomy traffic setting.

Research such as that described in~\cite{dahiya2021scalable} seeks to optimize the allocation of limited human assistance (via a decision support system) to multiple robots performing tasks in a given environment. 
The machines in these works benefit from human assistance, but do not exhibit the strict safety-critical requirements of AVs in traffic. 
As such, these works also place greater emphasis on computing and comparing the various assistance permutations. 
Such comparisons are not necessary given our binary approach to safety and strict safety requirements (i.e., no suboptimal safety configurations are permitted). 

\paragraph{Scalable supervision and cognitive models}
To the best of our knowledge, previous work concerned with decreasing the time burden of human supervision of autonomous tasks tends to be directed at empirical studies of a single human monitoring multiple UAVs (unmanned aerial vehicles)~\cite{cummings2007automation, cummings2007predicting, kidwell2012adaptable}.
Relatedly, human-robot interaction theory developed in the early 2000s considers `fanout,' the number of robots that a human can effectively control~\cite{cummings2007predicting, kidwell2012adaptable, humann2019human, conesa2015distributed, olsen2003metrics, olsen2004fan}.
While philosophically aligned with our work on some levels, in the AV merge setting we isolate a single task to analyze and face safety challenges unique to the mixed autonomy setting. 
Additionally, the mixed autonomy AV setting considered here is arguably higher-stakes than a typical UAV setting, as it includes human drivers operating in the same vicinity; crashes would more likely involve human fatalities.

\textcolor{black}{Our work can be seen as complementary to the research described in~\cite{hampshire2020beyond} and~\cite{daw2019beyond}, which also applies air traffic control principles to human, remote supervision of AV fleets.
Our reliability metric parallels the `exceedance probability' in~\cite{daw2019beyond}, but whereas that work's queuing analysis assumes batch arrivals of simulation-generated tasks upon which humans provide input (but not direct control), supervision tasks in our case are assigned to individual remote operators who control the vehicle for the entire task duration.
More broadly, our approach to defining the conditions requiring supervision is compatible with these works, which assume AVs can predict such scenarios.
Finally, these works do not consider how AV cooperation can facilitate supervision scaling.}


A closely related work is that on scaled autonomy by Swamy, et al., in which the authors investigate how to assist an operator in selecting which robots in a fleet most require teleoperation~\cite{swamy2020scaled}.
Our work differs by seeking to minimize the supervisor's active control time (the teleoperator in that work is never idle) and the mixed autonomy setting at the heart of our work must maintain strict safety standards. 

Broader research in this direction has considered a single machine supervisor observing vehicles with known paths~\cite{ahn2020robust} and algorithmic supervision of human drivers---in which a machine overrides human driving controls when the human is behaving in ways known to be unsafe~\cite{altche2017algorithm}.


\paragraph{Reachability analysis}
At its most basic, reachability analysis is about identifying the set of states that a dynamical system could enter---the `reachable states' or `reachable set'---given all admissible inputs and parameters~\cite{althoff2020set}.
A variety of methods exist for doing so; a common trade-off among these is that between approximation and computational inefficiency~\cite{kurzhanski2000ellipsoidal, liebenwein2018sampling}. 
Methods such as sampling-based approaches may require less computation, but they are not fully conservative, meaning they are not guaranteed to compute the reachable set~\cite{liebenwein2018sampling}.
For safety-critical applications, such approximations may be unacceptable. 

One method for computing the exact reachable set is Hamilton-Jacobi (HJ) reachability. 
HJ methods can handle nonlinear system dynamics and allow for formal treatment of bounded disturbances, but suffer exponential computational complexity in terms of the number of state variables~\cite{bansal2017hamilton}.
Fortunately, the vehicle dynamics and merging problem structure allow us to adopt a fully conservative approach while avoiding the computational complexity.
Our proposed reachability-based method calculates the exact reachable set for each vehicle in $O(1)$ time. 
However, we note that utilizing other mechanisms of computing conservative reachable sets (such as Hamilton-Jacobi reachability) could be used in future work generalizing to more scenarios~\cite{bansal2017hamilton}. 
Other work that seeks to integrate reachability analysis and AVs relaxes the conservatism of their approaches by making assumptions about human driving behaviors~\cite{bahati2020multi}, or by not accounting for the full range of system dynamics~\cite{leung2020infusing}.
The need remains for an approach that maintains strict safety guarantees. 




\section{Formalizing Scalable Supervision}


\begin{table*}[t]
  \caption{\textcolor{black}{Vehicle types and key properties. All AVs are assumed to allow for remote supervision and control; the connectivity referenced in the table (`unconnected,' `connected') refers to an AV's ability to communicate with other AVs in the system, in particular the merging AV.}}
  \label{tab:vehicle_properties}
  \centering
  \begin{tabular}{c|c|p{3.3cm}|p{5cm}|p{2.9cm}}
     &  & \multicolumn{3}{c}{\textbf{Properties}} \\ \cline{3-5}
    \textbf{Vehicle Type}  &  \textbf{Acronym} & Requires remote supervision \newline for dangerous merges & Shares near-term trajectory w/ merging AV \newline (resulting in truncated reachability zone) & Adjusts behavior to \newline accommodate AV merges \\ \hline\hline
    Human Vehicle  & HV  & &  &   \\ 
    Unconnected AV & UCAV & \hfil \checkmark & &   \\ 
    Noncooperative Connected AV & NCAV & \hfil \checkmark & \hfil \checkmark & \\ 
    Cooperative Connected AV & CCAV & \hfil \checkmark & \hfil \checkmark & \hfil \checkmark \\ 
  \end{tabular}
\end{table*}

In this section, we provide a formalization of scalable supervision.

\subsection{Formalizing Scalable Supervision}
\label{subsec:formalizing}
At its core, the two goals of scaling supervision are to reduce the number of necessary human interventions and to reduce the time required for those that must occur.  
One way to do this (our approach) is by utilizing reachability analysis to identify when a merging AV is in danger of colliding with another vehicle in the system, and activate human control of the AV \textcolor{black}{only} at that time. 
\textcolor{black}{This prevents unnecessary human interventions (e.g., for a merge when no other vehicles are around), and only calls for supervision at the moment transition of control is necessary, achieving the two goals above.} 
It is also important to note that `safety' in this paper is not synonymous with crash-free behavior. 
\textcolor{black}{This aligns with the operational reliability perspective on safety described above.}
Since the safety backup for the AV is a human driver, the ultimate safety of the system remains subject to human fallibility. 

To understand how our approach works, first imagine an AV on an on-ramp merging into a highway on which a human vehicle (HV) is already driving. 
The objective is to prevent the two vehicles from colliding during the AV's merge, so the reachability question, simply put, is whether it is possible for both the AV and HV to get to the merge point over some finite time horizon $t$.
The intuition here is that a supervisor need not assume control of the AV if a potential collision is far in the future; it is enough to assume control before the potential crash but with enough time to avoid it comfortably. 
We can vary $t$ to be more or less cautious in any given scenario.
In at least one setting humans required 5-8 seconds to safely assume control of a vehicle; this is the time window we adopt in generating results~\cite{johns2016exploring}.

The reachability question naturally decomposes into two independent subquestions: (1) Can the HV reach the merge point? And (2) Can the AV reach the merge point? 
That is, for each vehicle, is the distance it would travel if it were to apply its maximum acceleration over the horizon $t$ greater than or equal to the distance between that vehicle's current position and the merge point?
If the answer to \textit{both} subquestions is `yes,' then human supervisor control of the AV must be activated to ensure \textcolor{black}{human-level} safety. 
See Figure~\ref{fig:reach_conditions} for a visual representation.

We can write this more formally: the horizon $t_{sup}$ within which the supervisor \textcolor{black}{must assume control of a merging AV for safety} (denoted by the $AV$ subscript) is 
\textcolor{black}{
\begin{multline} \label{eq:1}
    t_{sup} = \min \{t \mid t \geq 0, (d_{HV}(v_{HV, 0}, a_{HV,max}, t) \geq d_{m, HV}) \\ \wedge (d_{AV}(v_{AV, 0}, a_{AV,max}, t) \geq d_{m, AV})\},
\end{multline} 
}
where $d_i(v_{i,0}, a_{i,max}, t)$ is the maximum distance vehicle $i$ can travel in time horizon $t$ and is a function of that vehicle's initial velocity $v_{i,0}$ and its maximum acceleration $a_{i,max}$, $d_{m,i}$ is the distance between vehicle $i$ \textcolor{black}{(either an HV or an AV)} and the merge point $m$, \textcolor{black}{and the HV is on the highway onto which the AV is merging}.
The road networks considered here are limited to one-lane roads and thus considering a vehicle's maximum forward distance is sufficient.
This amounts to the most conservative reachability for the given time horizon: there is no location that the vehicle could reach over $t$ time that is not within its reachability zone. 
Note that this maintains safety guarantees even when generalizing to cases in which 2-dimensional motion is considered (such as by adding lane changes).
The maximum distance reachability formulation still provides a conservative reachable zone because side-to-side motion only can reduce the distance along the road that the vehicle travels, rather than increase it. 

\subsection{Scenario}

Because this work focuses on merging, the analysis begins by considering a large rotary traffic network, which approximates a highway with multiple on-ramps and off-ramps. 
The ring road at the core of the structure is a well-studied setting in traffic literature and has been shown to mimic traffic congestion patterns that might occur on an infinite roadway~\cite{sugiyama2008traffic}.
The ring therefore emulates a highway.
Merging vehicles enter the system via the on-ramps, merge into the ring road, and then exit via off-ramps
(see Figure~\ref{fig:merge_schematic}).

Since this is a mixed autonomy system, we consider both AVs and human vehicles (HVs). 
Acceleration profiles are parameterized via the widely used Intelligent Driver Model (IDM), which has been shown to emulate the actual behavior of human drivers~\cite{treiber2000congested, treiber2013traffic}.
\textcolor{black}{We first present analysis for mixed autonomy traffic with unconnected AVs (UCAVs). These are AVs that cannot communicate with other AVs in the system (but still allow for remote supervision).}
Next, we consider noncooperative connected AVs (NCAVs)---also using IDM---to investigate the supervision scaling implications of connected AVs.
These are modeled such that they can communicate with other connected AVs in the system, but do not alter their behavior to accommodate merging connected AVs.
We will see that they allow for linear improvements in supervision scaling. 
Lastly, we analyze cooperative connected AVs (CCAVs):
these represent supervision-aware AVs that not only can communicate with other connected AVs in the system, but also seek to assist merging connected AVs to effectively join highway traffic while reducing the need for human supervision.
Via analytical results we show how the teaming of CCAVs enables orders-of-magnitude improvement in supervision requirements.
\textcolor{black}{See~\Cref{tab:vehicle_properties} for a summary of the vehicle types and their properties.}
\textcolor{black}{Note that all AVs (including UCAVs) are assumed to allow for remote supervision and control.}

\section{Theoretical Analysis}

By applying reachability analysis to the problem of AV merging, a number of bounds can be derived for the settings in which we are interested. 
To analytically quantify the risk and scalability of supervision, we take a queuing theoretic approach in this section to analyze long-term steady-state conditions.

This analysis begins with a description of an upper bound on the supervision requirements for the scenario in which a group of remote supervisors manage merges for an arbitrarily large number of on-ramps, as well as a closed-form expression for calculating the probability that AVs need supervision but cannot receive it. 
This is of particular interest because it allows for the characterization of the risk associated with a system with a fixed number of supervisors.
Conversely, it allows one to calculate the number of supervisors necessary to achieve a desired level of supervision safety. 
After the provision of the upper bound and analysis of the UCAV and NCAV settings, an analysis of the `typical case' is provided, wherein the expected supervision scaling gains via (supervision-aware) CCAVs are described. 

\textcolor{black}{Let $C$ be a \textit{potential} conflict event. 
Here this refers to the joint event that a merge point falls within the reachable zone of at least one vehicle in the ring (event $C_{\text{in}}$) and that the same merge point falls within the reachable zone of a vehicle outside the ring that is attempting to merge in (event $C_{\text{out}}$).
$C_{\text{in}}$ and $C_{\text{out}}$ thus correspond to the subquestions outlined in~\Cref{subsec:formalizing}.
In different settings one may consider alternative dangerous situations and methods other than reachability analysis to indicate them.
One necessary component of the remote supervisor analysis is a characterization of $P(C_{\text{in}})$, that is, the probability that the merge point falls within the reachable zone of at least one vehicle on the ring.}
The analysis further below investigates this by considering how an upper bound on $P(C_{\text{in}})$ varies in the settings in which the in-ring vehicles are entirely HVs, a mix of HVs and NCAVs, and finally a mix of HVs and (supervision-aware) CCAVs.
Note the NCAVs are \textit{not} antagonistic; they do not seek to inhibit merges, but simply do not alter their path to accommodate them. 

It is important to first note that we can rewrite~\Cref{eq:1} using kinematics.
We allow each vehicle to take any non-negative velocity $v \in [0, \infty)$ and only assume finite vehicle acceleration $a_i \in [a_{i, min}, a_{i, max}]$. 
\textcolor{black}{This provides conservative, physics-informed safety: it is impossible for a vehicle to cover more distance than it does when maximally accelerating over the time horizon's duration.} 
Thus, preserving this view of reachability requires accounting for the worst-possible case, and the reachable set for vehicle $i$ at position $0$ is $[0, d_{i}(v_{i,0}, a_{i, max}, t)]$.

This conservative approach to reachability has the ancillary benefit that vehicle $i$'s reachable distance in the given time horizon can be written simply: 
\begin{equation} \label{eq:d_kinematics}
    d_{i}(v_{i,0}, a_{i, max}, t) = v_{i,0} t + \frac{1}{2} a_{i, max} t^2
\end{equation}

\textcolor{black}{While this work uses IDM values for the vehicles' maximum accelerations,} in practice acceleration profiles can be derived from data for determining a pragmatic upper bound for a given setting\textcolor{black}{~\cite{rakha2004vehicle}}. 
\textcolor{black}{Similarly, one could consider decoupling $t$ in the above from vehicle dynamics. 
For example, it (or even the reachable distance itself) could be provided by a prediction system that estimates vehicles' forward progress~\cite{brown2020taxonomy}.
This work uses the kinematics-based approach to maintain strict safety requirements, but we expect much of the analysis to apply to less conservative approaches.
In light of this, in the statements below these components are left as variables (although sometimes omitted from explicit inclusion for ease of exposition). 
Specific values are only applied during the generation of the numerical results and are sourced from relevant literature.} 
With this in mind, we turn to the challenge of characterizing the multiple-supervisor, multiple-merge case. 

\subsection{Multiple supervisors monitoring an AV fleet}

We consider a future setting in which a team of human supervisors located remotely could monitor a fleet of AVs and assume remote control if necessary.
A natural question would be: how many supervisors are necessary? 

\begin{theorem} \label{theorem_queue}
Suppose we have $k$ remote supervisors and $n$ on-ramps on which $AV$s appear and trigger the on-ramp supervision condition with independent arrival processes $Poisson(\lambda_1), ..., Poisson(\lambda_n)$. Suppose the service rate of each remote supervisor follows $Exp(\mu)$. 

The fraction of AVs that require supervision but cannot immediately receive it (and thus go unsupervised) is given by
\begin{equation}
\label{eq:pm}
    \begin{split}
        P_k = \frac{(\lambda/\mu)^k / k!}{\sum\limits_{i=0}^{k} (\lambda / \mu)^i / i!}
    \end{split}
\end{equation}
where $\lambda = \sum\limits_{j=1}^{n}\lambda_j P(C_{\text{in}})$, $P(C_{\text{in}})$ is the probability that an in-ring vehicle will trigger its supervision condition, and $\lambda < \mu$ in order to have a valid steady state probability.
\end{theorem}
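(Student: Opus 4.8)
The plan is to recognize the setting of \Cref{theorem_queue} as a classical Erlang loss system (an $M/M/M/M$ queue) and to derive $P_M$ via the standard birth--death stationary analysis, preceded by two Poisson-process reductions. First I would reduce the arrival stream: on-ramp $i$ emits AVs as a $\mathrm{Poisson}(\lambda_i)$ process, but an arriving AV actually requires a supervisor only when the in-ring vehicle's reachable zone (from \Cref{eq:d_kinematics}) contains the merge point, which I treat as an independent $\mathrm{Bernoulli}(P(reach_{in\_ring}))$ thinning of that stream. By the thinning theorem the needs-supervision arrivals on ramp $i$ form a $\mathrm{Poisson}(\lambda_i\,P(reach_{in\_ring}))$ process, and by the superposition theorem their union over the $n$ independent ramps is $\mathrm{Poisson}(\lambda)$ with $\lambda=\sum_{i=1}^{n}\lambda_i\,P(reach_{in\_ring})$, exactly the $\lambda$ appearing in the statement.

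Next I would treat the service side. Since each supervisor's control episode is $\mathrm{Exp}(\mu)$ and a busy supervisor cannot accept a second AV, the number of occupied supervisors is a continuous-time Markov chain on $\{0,1,\dots,M\}$ with birth rate $\lambda$ out of every state $k<M$ and death rate $k\mu$ out of state $k$; an AV arriving while all $M$ supervisors are busy is blocked and goes unsupervised. Detailed balance gives $\pi_k=\pi_0\,(\lambda/\mu)^k/k!$, and normalizing via $\sum_{k=0}^{M}\pi_k=1$ yields $\pi_0=\bigl(\sum_{i=0}^{M}(\lambda/\mu)^i/i!\bigr)^{-1}$ and hence $\pi_M=\dfrac{(\lambda/\mu)^M/M!}{\sum_{i=0}^{M}(\lambda/\mu)^i/i!}$, matching \Cref{eq:pm}.

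The last step is to pass from a time average to a customer average: the quantity of interest is the fraction of \emph{arriving} AVs that find no free supervisor, not the fraction of \emph{time} that all supervisors are busy. Because the needs-supervision arrival process is Poisson, the PASTA property (Poisson Arrivals See Time Averages) equates the two, so the blocked fraction equals $\pi_M$ and $P_M=\pi_M$.

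I expect the real content to lie in the first and third steps rather than in the birth--death bookkeeping. The thinning/superposition reduction needs the modeling assumption that whether successive AVs trigger their supervision condition is independent across AVs and of the arrival epochs; without it the thinned stream need not remain Poisson and the chain need not be a clean $M/M/M/M$ queue. The PASTA step is what licenses treating a fraction of customers as a fraction of time, and should be invoked explicitly. Finally, since the chain has finite state space it is automatically positive recurrent, so a stationary law always exists; I would therefore present the hypothesis $\lambda<\mu$ as the natural light-load regime (one supervisor could in principle keep up on average) in which $P_M$ is small, rather than as a condition required for the derivation itself.
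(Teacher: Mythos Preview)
Your proposal is correct and follows essentially the same route as the paper: reduce the $n$ on-ramp streams to a single Poisson arrival process (the paper superposes first and then thins via the Poisson--Binomial identity, whereas you thin each ramp and then superpose, but the two orders are equivalent), and then identify the resulting system as an Erlang loss ($M/M/M/M$) queue whose blocking probability is the desired $P_M$. The paper simply cites the loss formula from a queueing text rather than deriving it via detailed balance, and it neither invokes PASTA explicitly nor remarks that the finite-state chain is positive recurrent regardless of $\lambda/\mu$, so your treatment is actually more self-contained on those points.
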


\begin{proof}
Allow different on-ramps to have different arrival rates $\lambda_1, ..., \lambda_n$.
Because the supervision tasks are allocated to a centralized group of remote supervisors, the problem reduces from a tasking situation involving $n$ separate queues to a single-queue tasking case.
Given \textcolor{black}{the additivity of independent Poisson random variables} (see~\Cref{poisson_random_var_sum_statement} in the appendix), the arrival rate of AVs requiring supervision follows $Poisson((\sum\limits_{j=1}^{n}\lambda_j)P(C_{\text{in}}))$.
\textcolor{black}{Poisson arrivals are known to successfully model true vehicle arrivals and are commonly used highway on-ramp arrival modeling~\cite{rengaraju1995vehicle, sarla2020performance, mirchandani2006analytical, ding2020penetration}.} 
\textcolor{black}{Exponential service rates are used in modeling air and roadway traffic supervision tasks in~\cite{hampshire2020beyond} and~\cite{zhang2018analysis}.} 
Set $$\lambda = (\sum\limits_{j=1}^{n}\lambda_i)P(C_{\text{in}}).$$

Let $X$ be the random variable denoting the number of AVs requiring control at an arbitrary point in time. 
Without loss of generality, assume $\lambda_1 = ... = \lambda_n = \hat{\lambda}/ n$ for some constant $\hat{\lambda}$, then the AVs arriving on the on-ramp trigger the supervision condition following the distribution $X \sim Poisson(\sum\limits_{i=1}^{n}\hat{\lambda} / n) = Poisson(\hat{\lambda})$ (again, see~\Cref{poisson_random_var_sum_statement} in the appendix).
Let $Y$ be a random variable denoting the number of AVs that need to be supervised.
We know $Y | X \sim Binomial(X, P(C_{\text{in}}))$ where $P(C_{\text{in}})$ can be obtained from the in-ring vehicle’s reachability condition and will be derived further below.

By~\Cref{poisson_binomial_statement} (see Appendix), we have that $Y \sim Poisson(\hat{\lambda} P(C_{\text{in}}))$ is the arrival process for AVs that need to be supervised.
Then, the problem reduces to an $M/M/k$ queue with arrival process $Poisson(\hat{\lambda}P(C_{\text{in}}))$, service rate $Exp(\mu)$ for each supervisor, and finite capacity $k$.
The finite capacity is due to the fact that when all supervisors are busy, any additional on-ramp AVs requiring supervision will be rejected immediately. 
These cases cannot wait on the queue for future service as their needs are immediate; that is, to maintain safety, once the on-ramp and in-ring conditions are triggered for a given merge, a supervisor must immediately supervise the merge. 
Thus, the rejected cases that the pool of $k$ supervisors cannot immediately service represent dangerous situations. 

For an $M/M/k$ queue with no waiting space, both the steady state probability and loss formula are known~\cite{gross2008fundamentals}.
The expression for the steady state probability is
\begin{equation}
    \begin{split}
        P_q = \frac{(\lambda / \mu)^q / q!}{\sum\limits_{i=0}^{k} (\lambda / \mu)^i / i!}
    \end{split}
\end{equation}
where $q \in\{0,\dots,k\}$. Then, $q$ represents the number of AVs requiring supervision, $k$ is the total number of supervisors (as previously defined), the arrival rate is $\lambda = \hat{\lambda}P(C_{\text{in}})$, and $\mu$ is kept as the service rate.
The loss formula here represents the fraction of AVs that require supervision but cannot immediately receive it (and thus go unsupervised), and the result follows.


\end{proof}

Based on Equation~\eqref{eq:pm}, the expected number of supervisors needed is the smallest $k$ such that $P_k \leq \delta$ for some risk tolerance of interest $0 < \delta << 1$.
\textcolor{black}{Various examples of the relationships encoded in~\Cref{eq:pm} are visualized in~\Cref{fig:geneqn_num_sup_num_avs_single} and~\Cref{fig:geneqn_confidence_single} for a mixed HV/UCAV traffic network setting (described below) with fixed inflow and service rates.
Figures~\ref{fig:geneqn_num_sup_array},~\ref{fig:geneqn_avs_per_sup_array}, and~\ref{fig:geneqn_confidence_array} in the Appendix present results for a variety of inflow and service rates; these illustrate how the trends shown in Figures~\ref{fig:geneqn_num_sup_num_avs_single} and~\ref{fig:geneqn_confidence_single} hold more generally.}
In short, these answer the question, ``How many supervisors do I expect to need?" in the UCAV case, assuming the questioner can define their risk threshold $\delta$.

In the above, observe that---due to the functional reduction of the $n$ inflows to a single task queue serviced by the $k$ supervisors---the exact number of on-ramps does not directly affect the steady state probability or loss formula, except insofar as the individual arrival rates $\lambda_1, ..., \lambda_n$ contribute to the global arrival rate.

\begin{figure*}[tbhp]
    \centering
    \includegraphics[width=0.85\textwidth]{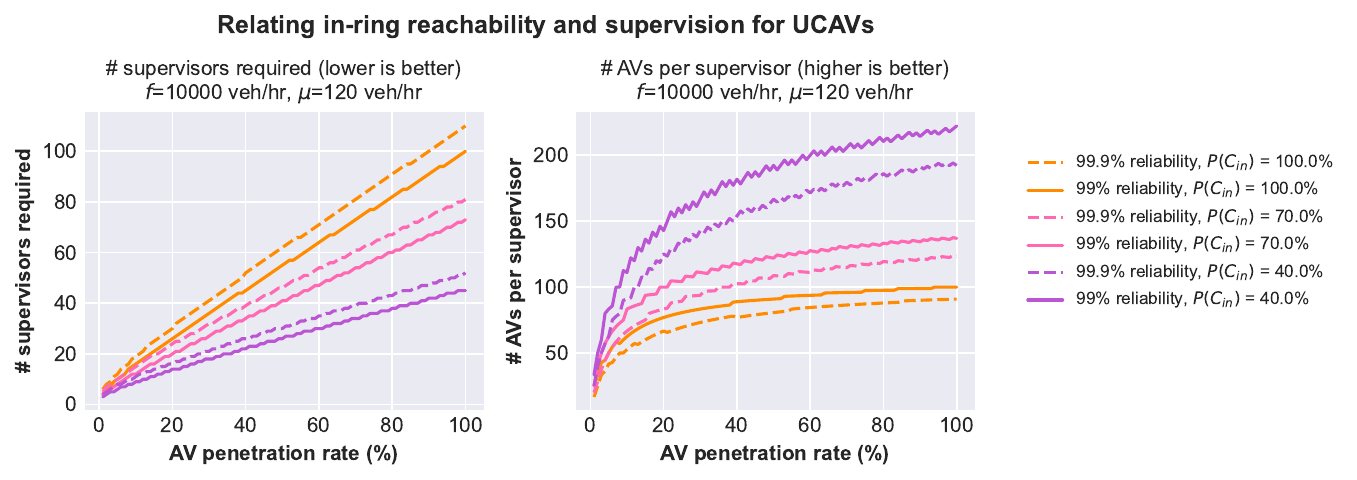}
    \caption{An illustration of the relationships in~\Cref{eq:pm} for the `baseline' case with unconnected AVs (UCAVs).
    $f$ is the total vehicle inflow rate.
    \textcolor{black}{`AV penetration rate’ refers to the percentage of vehicles in the system (including inflows) that are autonomous.}
    Note that as the AV penetration rate grows, so does the number of supervisors needed to maintain a given safety threshold. This is because the number of merging AVs grows with greater penetration rates. Also note how altering $P(C_{\text{in}})$ affects the plots through its effect on $\lambda$.
    \textcolor{black}{The $\mu$ value represents monitors that can supervise a merge in an average of 30 seconds. The selection rationale for the values of $\mu$ and $f$ is described in~\Cref{subsec:sup_req_for_traffic_network}.}
    } 
    \label{fig:geneqn_num_sup_num_avs_single}
\end{figure*}

\begin{figure}[t]
    \centering
    \includegraphics[width=0.35\textwidth]{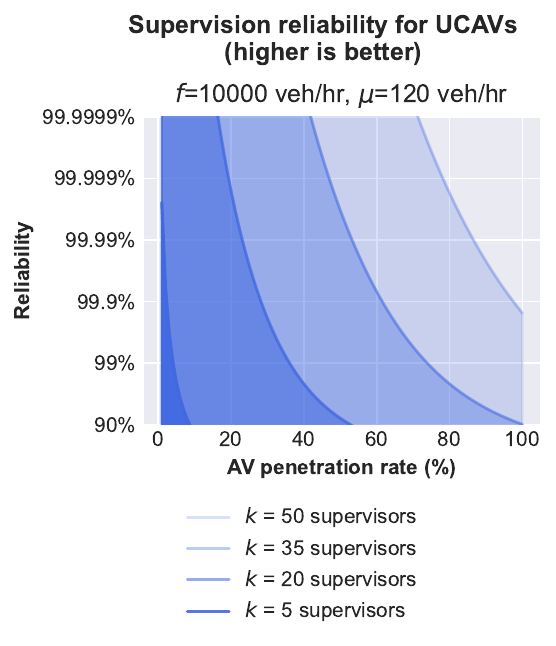}
    \caption{\textcolor{black}{A visualization of the reliability thresholds in the UCAV case, where the `reliability' metric is calculated as $1 - P_k$. For example, 99.9\% reliability means that of 1000 AVs requesting a supervisor due to reachability conditions being violated, 1---on average---will not receive it. This does not necessarily mean a crash occurs, but that one is possible and no human supervisor is available to monitor the situation. As AV penetration increases, the system's ability to maintain adequate supervision drops monotonically. Note the logarithmic y-axis.}}
    \label{fig:geneqn_confidence_single}
\end{figure}

\textcolor{black}{We recognize that the exponential distribution of service times inherent to a Poisson process can present unrealistic scenarios (service rarely takes zero time).
Indeed, this is a known limitation when using Poisson processes for modeling physical systems.
A truncated exponential distribution can mitigate this issue \textcolor{black}{and can be handled explicitly using discrete event simulation.} 
The numerical analysis below adopts rates $\mu$ substantially larger than the reachability horizon to implicitly account for this shift.} 

Finally, note the importance of the \textit{in-ring reachability} $P(C_{\text{in}})$ in determining whether an AV requires supervision. 
This follows naturally from the problem formalization, in which both the in-ring and on-ramp supervision conditions (reachability conditions) must be met in order for a merging AV to require supervision. 
The following sections provide insight into this term.
\textcolor{black}{The above equations are provided as functions of $P(C_{\text{in}})$, among other variables.
In the following lemmas, corollaries, and theorem, we consider various cases where we can compute $P(C_{\text{in}})$ analytically to investigate how the system's supervisability under various conditions.} 

\subsection{In-ring reachability for \textcolor{black}{mixed HVs and unconnected AVs (UCAVs)}}

\begin{lemma} \label{lemma_HVs}
Given a single-lane ring road of circumference $c$ with $R$ \textcolor{black}{unconnected} vehicles \textcolor{black}{(HVs or UCAVs)} distributed uniformly at random (but not necessarily independently) along the length of the ring, the probability that an arbitrary fixed point on that ring is reachable over time horizon $t$ by a vehicle is bounded above as follows:
\begin{equation}
    P(C_{\text{in}}) \leq \frac{\sum\limits_{i=1}^{R} d_i(v_{i,0}, a_{i,max}, t)}{c}
\end{equation}
where $d_i(v_{i,0}, a_{i, max}, t)$ is the reachable range for vehicle $i$ over the time horizon.
\end{lemma}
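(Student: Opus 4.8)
The plan is to write the in-ring reachability event as a union of $R$ single-vehicle ``can-reach-$m$'' events and then bound its probability by Boole's inequality (the union bound). The reason this is the natural route is that the union bound needs only the \emph{marginal} law of each vehicle's position and never any joint independence, which is precisely why the lemma is allowed to let the $R$ placements be correlated rather than independent.

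Concretely, I would first fix the point $m$ and parametrize the ring by arclength in the direction of travel, identifying it with $[0,c)$; write $x_i$ for the position of vehicle $i$ and $d_i$ for $d_i(v_{i,0},a_{i,max},T)$. By the conservative forward-reachability model recalled above — the reachable set of vehicle $i$ over horizon $T$ is the forward interval $[0,d_i]$, cf.\ Equation~\eqref{eq:d_kinematics} — the point $m$ is reachable by vehicle $i$ exactly when $x_i$ lies in the arc of length $\min(d_i,c)$ terminating at $m$ (the stretch of road immediately upstream of $m$). Call this event $A_i$. Since $x_i$ is marginally uniform on the ring, the probability that it falls in any fixed arc of length $\ell\le c$ is $\ell/c$, so $P(A_i)=\min(d_i,c)/c\le d_i/c$. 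Writing $reach_{in-ring}=\bigcup_{i=1}^{R}A_i$ and applying the union bound,
\[
    P(reach_{in-ring}) \;=\; P\Big(\bigcup_{i=1}^{R}A_i\Big) \;\le\; \sum_{i=1}^{R}P(A_i) \;\le\; \frac{\sum_{i=1}^{R} d_i(v_{i,0},a_{i,max},T)}{c},
\]
which is the claimed inequality.

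There is no deep step here; the only point requiring care is the wrap-around case. If some $d_i$ exceeds $c$, the upstream arc covers the entire ring, $A_i$ becomes the certain event, and $P(A_i)=1\ne d_i/c$ — but in that regime $\sum_i d_i/c>1$ already, so the stated bound is vacuously true, and the clamp $\min(d_i,c)\le d_i$ lets one dispatch both regimes in a single line. I would also remark explicitly that (i) only the one-dimensional marginal distribution of each $x_i$ enters the argument, so arbitrary dependence among vehicle positions (such as enforced minimum headways) is permitted, matching the hypothesis, and (ii) by rotational invariance of the uniform law the bound does not depend on the choice of $m$, so ``arbitrary fixed point'' costs no generality.
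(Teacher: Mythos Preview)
Your proof is correct and follows essentially the same route as the paper: write $\{reach_{in\text{-}ring}\}$ as the union over $i$ of the single-vehicle reachability events, apply Boole's inequality, and evaluate each marginal term via the uniform law on $[0,c)$. Your treatment is in fact slightly more careful than the paper's, since you explicitly handle the $d_i>c$ wrap-around via the clamp $\min(d_i,c)\le d_i$ and spell out why only marginal (not joint) uniformity is needed.
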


\begin{proof}
Suppose $X$ is the fixed merge point \textcolor{black}{and the $R$ unconnected vehicles are composed of $R-S$ HVs and $S$ UCAVs}. Let $H_i$ be the asymmetric distance (following traffic flow) between the $i^{th}$ in-ring \textcolor{black}{unconnected vehicle} and $X$. Then the event
\begin{equation} \label{union_cond_eqn_HV}
    \begin{split}
        C_{\text{in}} &= \cup_{i=1}^{R-S}\{\text{HV $i$ triggers supervision cond.}\} \\
        &\qquad \cup \left(\cup_{j=1}^{S}\{\text{UCAV $j$ triggers sup. cond.} \}\right) \\
        &= \cup_{i=1}^{R}\{H_i \leq d_i(v_{i, 0}, a_{i,max}, t)\}
    \end{split}
\end{equation}
where the first equality comes from the fact that the event $C_{\text{in}}$ happens if any of the \textcolor{black}{unconnected vehicles} triggers the supervision condition, and the second equality comes from the reachability analysis for each \textcolor{black}{HV and UCAV}.
\textcolor{black}{Because the UCAVs cannot communicate with other vehicles in the system, merging AVs must adopt the same conservative approach applied to HVs. Indeed, from the merging AV's perspective, HVs and UCAVs are functionally indistinguishable.}

Applying a union bound to the joint probability event in~\Cref{union_cond_eqn_HV}, we have
\begin{equation}
    \begin{split}
        P(C_{\text{in}}) & \leq \sum\limits_{i=1}^{R}P(\{H_i \leq d_i(v_{i, 0}, a_{i,max}, t)\}) \\
        &= \sum\limits_{i=1}^{R}\frac{ d_i(v_{i, 0}, a_{i,max}, t)}{c}
    \end{split}
\end{equation}
where the last equality comes from our assumption that each vehicle's location is distributed uniformly at random along the length of the ring, i.e. $H_i \sim U([0, c])$.
\textcolor{black}{This uniform distribution is a product of the infinite uniform roadway that the ring road simulates~\cite{sugiyama2008traffic}; even if traffic occurs, it is equally likely to do so at each point on the road. 
Furthermore, because this work focuses on merging into free-flowing highways, nonuniform congestion is beyond our scope.} 
However, note this does not require each vehicle's location be independent of other vehicles' positions; \textcolor{black}{this is investigated further below}. 

\end{proof}

The uniform distribution assumption above may be satisfied both by (a) a situation in which traffic is flowing freely around the ring, and (b) a setting with stop-and-go traffic where the congestion is equally likely to occur at any point within the ring.
A case not covered by the lemma is one in which congestion routinely occurs around the merge point, but this case is beyond the current scope of work given our focus on safety in high-speed merges.

\subsection{Joint in-ring and on-ramp reachability for \textcolor{black}{mixed HVs and unconnected AVs (UCAVs)}}
\begin{lemma} \label{lemma_HVs_joint_prob}
Given a single-lane ring road of circumference $c$ with $R$ \textcolor{black}{unconnected vehicles (HVs or UCAVs)} distributed uniformly at random (but not necessarily independently) along the length of the ring, and a merging vehicle $q$ distributed uniformly at random along an on-ramp of length $l_{\text{on-ramp}}$, the joint probability that the on-ramp's merge point into the ring road is reachable over time horizon $t$ by both an in-ring vehicle and the on-ramp vehicle is bounded above as follows:
\begin{equation}
    \begin{split}
        P(C) =\ &P(C_{\text{in}}) \times P(C_{\text{out}}) \\ 
        \leq\ &\frac{\sum\limits_{i=1}^{R} d_i(v_{i,0}, a_{i, max}, t)}{c} \times \frac{d_{q}(v_{q,0}, a_{q,max}, t)}{l_{\text{on-ramp}}}
    \end{split}
\end{equation}
where $d_i(v_{i,0}, a_{i, max}, t)$ is the reachable range for vehicle $i$ over the time horizon $t$.
\end{lemma}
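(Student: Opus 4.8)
The plan is to combine Lemma~\ref{lemma_HVs} with an analogous bound for the single on-ramp vehicle and then exploit the independence of the two merge-reachability events. First I would write the joint event $\{reach_{in-ring}\}$ (here understood as the conjunction required to trigger a merging AV's supervision, as in~\Cref{eq:1}) as the intersection $A \cap B$, where $A = \{\text{some in-ring vehicle reaches the merge point within } T\}$ and $B = \{\text{the on-ramp vehicle } q \text{ reaches the merge point within } T\}$. The events $A$ and $B$ depend on disjoint sets of random positions --- $A$ on the in-ring configuration of the $R$ vehicles, $B$ on the location of $q$ along the on-ramp --- and these are physically independent, so $P(A \cap B) = P(A)\,P(B)$.

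Next I would bound each factor separately. For $P(A)$ I invoke Lemma~\ref{lemma_HVs} directly, giving $P(A) \le \frac{1}{c}\sum_{i=1}^{R} d_i(v_{i,0},a_{i,max},T)$. For $P(B)$ I repeat the single-vehicle argument of Lemma~\ref{lemma_HVs}'s proof in one dimension: vehicle $q$'s position is uniform on $[0, l_{on\text{-}ramp}]$, its distance to the merge point is therefore uniform on that interval, and the supervision condition $\{d_{m,q} \le d_q(v_{q,0},a_{q,max},T)\}$ then has probability at most $\frac{d_q(v_{q,0},a_{q,max},T)}{l_{on\text{-}ramp}}$ (an equality once one truncates $d_q$ at $l_{on\text{-}ramp}$, hence an upper bound in general). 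Multiplying the two bounds yields the claimed inequality.

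The one step that needs care --- and which I expect to be the main obstacle to state cleanly --- is justifying the independence/product step rigorously. The in-ring vehicles' positions are explicitly allowed to be mutually dependent, and one must argue that this internal dependence is irrelevant: what matters is only that the \emph{joint configuration of the ring} is independent of \emph{where $q$ sits on the on-ramp}, which is the natural modeling assumption (the on-ramp vehicle's random placement is not coupled to the ring's state). I would make this assumption explicit, note that it is weaker than full mutual independence of all vehicles, and then the factorization $P(A\cap B)=P(A)P(B)$ follows. Everything else --- the union bound inside $P(A)$, the uniform-distance computation for $P(B)$ --- is routine and already carried out in the proof of Lemma~\ref{lemma_HVs}.
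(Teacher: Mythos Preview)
Your proposal is correct and follows essentially the same approach as the paper: the paper's proof simply asserts that the bound ``follows from analyzing the joint probability of independent events'' and that the on-ramp factor is obtained by the same uniform-distance reasoning as in Lemma~\ref{lemma_HVs}, under the standing assumption $l_{on\text{-}ramp} \ge d_q$. If anything, your treatment is more careful---you make explicit the independence assumption between the ring configuration and the on-ramp position and note that the on-ramp factor is an inequality in general---whereas the paper leaves these points implicit.
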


\begin{proof}
Without loss of generality, we assume a sufficiently long on-ramp, i.e. $l_{\text{on-ramp}} \geq d_i(v_{i,0}, a_{i, max}, t)$.

The above follows from analyzing the joint probability of independent events.
The rightmost term $P(reach_{\text{on-ramp}})$ is the probability that an arbitrary point on the on-ramp is within the reachable zone of merging vehicle $q$ over horizon $t$, and follows similar logic to that used for writing $P(C_{\text{in}})$.
\end{proof}

Consequently, this also relies upon the assumption that the likelihood of finding the merging vehicle in a given position is evenly distributed across the on-ramp.
This assumption is more tenuous in this case than it was in~\Cref{lemma_HVs} because in situations of interest (such as when traffic exists on the highway) the AV may have to yield.
This would cause the AV to spend a disproportionately large amount of time just before the merge point.
\textcolor{black}{A more thorough on-ramp analysis would also have to account for the interaction between AV behavior and HV responses; this is an interesting direction for future work.}

\subsection{In-ring reachability for mixed HVs and \textcolor{black}{noncooperative,} connected AVs (NCAVs)} 

Now consider how a mixed autonomy system with NCAVs may improve the situation. \textcolor{black}{NCAVs may communicate their near-term trajectories but are not assumed to alter their trajectories to avoid triggering supervision in the system.
That is, they do not actively accommodate merging vehicles.} 
\textcolor{black}{The benefit of this connectivity is that in-ring NCAVs' reachable zones can be reduced from where they \textit{might} be to where they \textit{will} be, along with a desired safety buffer.
This can have a substantial impact, as the kinematics-based reachability accounts for a vehicle's maximum possible acceleration and is thus likely to overestimate the true distance. 
This decreases the probability that they trigger the in-ring reachability condition. 
Further discussion of this point--with particular emphasis on the NCAVs' ability to predict their trajectories---is provided after the corollary's presentation.} 

\begin{corollary} \label{connected_av_cor}
Given a single-lane ring road of circumference $c$ with $R - S$ \textcolor{black}{HVs} and $S$ \textcolor{black}{N}CAVs distributed uniformly at random (but not necessarily independent of each other) along the length of the ring, the probability that an arbitrary point on that ring is reachable over time horizon $t$ by a vehicle is bounded above as follows: 
\begin{equation}
    P(C_{\text{in}}) \leq \frac{\sum\limits_{i=1}^{R-S} d_i(v_{i,0}, a_{i,max}, t) + \sum\limits_{j=1}^{S} l_j}{c},
\end{equation}
where $d_i(v_{i,0}, a_{i, max}, t)$ is the reachable zone for vehicle $i$ over the time horizon and $l_j$ is the length of NCAV $j$. Note that one may define $l_j$ to include a safety buffer. 
\end{corollary}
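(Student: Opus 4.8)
The plan is to follow the proof of \Cref{lemma_HVs} almost verbatim, altering only the contribution of the connected AVs. Fix the arbitrary merge point $X$ on the ring. For each in-ring vehicle $k$, the event ``vehicle $k$ triggers its supervision condition'' is determined by the in-ring reachability condition, and $\{reach_{in-ring}\}$ is the union of these $R$ events. For the $R-S$ unconnected vehicles the triggering event is $\{H_i \le d_i(v_{i,0}, a_{i,max}, T)\}$, with $H_i \sim U([0,c])$ the flow-direction distance from vehicle $i$ to $X$, exactly as before.

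The one genuinely new ingredient is to show that a connected AV $j$ contributes a triggering set of measure $l_j$ rather than $d_j(v_{j,0}, a_{j,max}, T)$. The argument is that, because connected AV $j$ broadcasts its near-term trajectory, the merging on-ramp vehicle knows precisely where AV $j$ will be throughout the horizon $T$ --- not merely the worst-case interval it could reach under maximal acceleration. Given the known trajectory, the merge can be planned to fit into the gap ahead of or behind AV $j$, so the only configuration that forces supervision is the one in which AV $j$ physically occupies (up to an optional safety buffer) the merge point itself. Hence the set of AV-$j$ positions that trigger supervision has length $l_j$, and with $\tilde H_j \sim U([0,c])$ the flow-direction distance from AV $j$ to $X$, the triggering event is $\{\tilde H_j \le l_j\}$.

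Combining the two cases, I would write $\{reach_{in-ring}\} = \bigl(\bigcup_{i=1}^{R-S}\{H_i \le d_i\}\bigr) \cup \bigl(\bigcup_{j=1}^{S}\{\tilde H_j \le l_j\}\bigr)$, apply Boole's inequality (which, as in \Cref{lemma_HVs}, tolerates arbitrary dependence among the vehicle positions, so the ``not necessarily independent'' hypothesis costs nothing), and evaluate each term using the uniform-position assumption: $P(H_i \le d_i) = d_i/c$ and $P(\tilde H_j \le l_j) = l_j/c$. Summing over the two index sets yields the stated upper bound.

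The main obstacle --- indeed the only step that is not a direct transcription of \Cref{lemma_HVs} --- is justifying the replacement of $d_j$ by $l_j$ for connected AVs, so I would take care to state the assumptions it leans on: the broadcast trajectory is reliable over the full horizon $T$; the on-ramp is long enough (in the sense used in \Cref{lemma_HVs_joint_prob}) that a known gap can actually be exploited; and padding $l_j$ with a safety buffer leaves the argument intact. I would also remark that this genuinely improves on \Cref{lemma_HVs} only when $l_j \le d_j(v_{j,0}, a_{j,max}, T)$, which holds whenever the horizon is long enough for acceleration to matter, and that this is exactly the mechanism by which connected AVs yield \emph{linear} gains in supervision scaling.
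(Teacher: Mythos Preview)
Your proposal is correct and follows essentially the same route as the paper: decompose $\{reach_{in\text{-}ring}\}$ as a union over the $R-S$ unconnected vehicles and the $S$ connected AVs, argue that connectivity collapses each AV's triggering set from $d_j$ down to $l_j$, then apply Boole's inequality together with the uniform-position assumption. The only cosmetic difference is that the paper phrases the connected-AV triggering event in terms of the AV's \emph{future} position at time $t=T$ (writing $\tilde A_j \sim U[0,c]$ and $\{\tilde A_j \le l_j\}$), whereas you speak of ``the set of AV-$j$ positions that trigger supervision''; the two formulations are equivalent under the uniform-marginal assumption, and your added remarks on the underlying assumptions and on when $l_j \le d_j$ are a nice bonus.
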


\begin{proof} 
Let $H_i$ be the asymmetric distance (following traffic flow) between the $i^{th}$ in-ring HV ($i=1, ..., R-S$) and the fixed merge point at the current time $t=0$. Let $\tilde{A}_j$ be the asymmetric distance between the $j^{th}$ in-ring NCAV ($j = 1, ..., S$) and the fixed merge point at future time $t=t_f$. Due to the connectivity, we know the near-term trajectory of the NCAVs and hence the location of the in-ring NCAVs up to future time $t=t_f$, so NCAV $j$ triggers supervision if its future location at time $t=t_f$ has distance less than \textcolor{black}{its length $l_j$ (i.e., if it is directly at the merge point).
$l_j$ could be expanded to include a safety buffer.} 
Additionally, the corollary's assumption says $H_i \sim U[0, c]$ and $\tilde{A}_j \sim U[0, c]$.

Then we similarly have the event $C_{\text{in}}$ as the event that any of the HVs or NCAVs trigger supervision:

\begin{equation} \label{union_cond_eqn_CAV}
    \begin{split}
        C_{\text{in}} & = \left(\cup_{i=1}^{R-S}\{\text{HV $i$ triggers sup. cond.}\}\right) \\
        &\qquad \cup \left(\cup_{j=1}^{S}\{\text{NCAV $j$ triggers sup. cond.} \}\right) \\
        &= \left(\cup_{i=1}^{R-S}\{H_i \leq d_i(v_{i, 0}, a_{i,max}, t)\}\right) \\
        &\qquad \cup \left(\cup_{j=1}^{S}\{\tilde{A}_j \leq l_j \}\right)
    \end{split}
\end{equation}

Applying a union bound to the joint probability event in~\Cref{union_cond_eqn_CAV} results in

\begin{equation}
    \begin{split}
        P(C_{\text{in}}) &\leq \sum\limits_{i=1}^{R-S}P(\{H_i \leq d_i(v_{i, 0}, a_{i,max}, t\}) \\
        &\qquad + \sum\limits_{j=1}^{S}P(\{\tilde{A}_j \leq l_j\}) \\
        &=\sum\limits_{i=1}^{R-S} \frac{ d_i(v_{i, 0}, a_{i,max}, t)}{c} + \sum\limits_{j=1}^{S} \frac{l_j}{c}
    \end{split}
\end{equation}

\end{proof}

\textcolor{black}{The $t_f$ term need not explicitly appear in these expressions because an NCAV's location is independent of the merge point, and thus $t_f$ is unnecessary for assessing probabilities (as opposed to individual instances).}

It is worth further examining connected AVs' ability to predict and communicate their trajectory.
\textcolor{black}{Trajectory planning is a well-studied problem in AVs~\cite{rasekhipour2016potential}.}
However, mixed autonomy settings present unique challenges to trajectory planning---while AVs can plan their own trajectories over a given time horizon in isolation, the human drivers on the road are unpredictable. 
How can AVs predict their own trajectories when sharing the road with human drivers? 

First, note that faster-than-expected HVs in this setting do not pose a problem for the connected AVs' trajectory planning; indeed, if they speed further ahead, the AVs have more space.
The instances which might be problematic are those in which an HV quickly slows. 

However, even here the problems subside upon further analysis. 
If an HV sharply brakes far ahead of the merge point, the connected AVs behind it are also far from the merge point, and thus do not pose a collision risk for the merging vehicle. 
If an HV sharply brakes far after the merge point, no connected AV near the merge point will be substantially affected, especially not immediately. 

Thus the instances of concern are further limited to those cases in which an HV rapidly brakes near the merge point when an on-ramp vehicle is about to merge. 
However, if the HV of concern is just prior to the merge point, note that it will have already triggered the supervisor, so any adjustment of a connected AV's propensity to trigger the supervision condition is redundant. 
(Recall that once a supervisor is triggered, it supervises the entire merge.)

The remaining case is that in which an HV has just passed the merge point when it rapidly decelerates. 
Yet the only way an HV manages to pass the merge point without triggering supervision is if the merge point is beyond the on-ramp vehicle's reachable zone during the entire time that same point is within the HV's reachable zone. 
Therefore, even in this final case of concern, if the HV behaves erratically, there remains the entirety of the reachability time horizon for the connected AV to respond---and likely significantly more, given that the merging vehicle's reachable zone assumes its maximum acceleration.

\textcolor{black}{Note that in these cases ultimate safety still depends on the ability of the human supervising the merging AV to respond to traffic conditions appropriately (as one would expect for any human driver).
The above analysis simply explains the in-ring NCAVs' ability to reduce the uncertainty in their reachability zones from where they might be to where they will be. 
This reduces the odds supervision requirements will be triggered while still preserving the time horizon for a supervisor to respond, should one be needed.} 
Of course, as discussed previously, unforeseen events can occur, but that is beyond the scope of this research focusing on the merge event.

\subsection{In-ring reachability for mixed HVs and \textcolor{black}{cooperative} (supervision-aware), connected AVs (CCAVs): worst case} 
\label{subsec:in-ring-reachability-worst-case}

Now consider how a mixed autonomy system with connected AVs that cooperate to avoid triggering supervision \textcolor{black}{(that is, supervision-aware CCAVs)} may improve upon the previous cases. 
\textcolor{black}{Recall that both CCAVs and NCAVs may communicate their near-term trajectories to merging AVs. However, whereas NCAVs do not alter their trajectories to accommodate merging AVs, CCAVs can alter their trajectory to assist AVs to merge without triggering human supervision.} 

We first consider a worst case improvement.
Then, in Section~\ref{subsec:in-ring-reachability-uniform-case} and Section~\ref{subsec:in-ring-reachability-induced-case}, we consider the improvement given more typical cases with the CCAVs.

\begin{corollary} \label{lemma_RLAV_worst}
Given a single-lane ring road of circumference $c$ with $R - S$ \textcolor{black}{HVs} and $S$ (supervision-aware) \textcolor{black}{CC}AVs distributed uniformly at random along the length of the ring, the probability that an arbitrary point on that ring is reachable over time horizon $t$ by a vehicle is bounded above as follows:
\begin{equation}
    P(C_{\text{in}}) \leq \frac{\sum\limits_{i=1}^{R-S} d_i(v_{i,0}, a_{i,max}, t)}{c},
\end{equation}
where $d_i(v_{i,0}, a_{i, max}, t)$ is the reachable zone for vehicle $i$ over the time horizon. 

\textbf{That is, the previous upper bound in~\Cref{connected_av_cor} is improved by dropping the $\sum\limits_{j=1}^{S} \frac{l_j}{c}$ term}.
\end{corollary}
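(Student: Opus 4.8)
The plan is to follow the proof of \Cref{connected_av_cor} almost verbatim, keeping the decomposition of the event $\{reach_{in-ring}\}$ into per-vehicle triggering events, but changing the treatment of the $S$ in-ring AV terms: instead of bounding each by $l_j/c$, I would argue that for a supervision-aware AV this event is \emph{empty}, so it drops out of the union entirely. Concretely, I would reintroduce the same objects as in that proof — the fixed merge point $X$, and for $i = 1, \dots, R-S$ the flow-direction distance $H_i \sim U[0,c]$ from the $i$-th unconnected in-ring vehicle to $X$ at time $t=0$ — and note that the conservative reachable-range test $\{H_i \le d_i(v_{i,0}, a_{i,max}, T)\}$ is now needed only for these unconnected vehicles.

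The crux is the claim that, in the worst case, a supervision-aware AV never triggers its in-ring supervision condition, i.e.\ the per-AV event appearing in \Cref{union_cond_eqn_CAV} is empty rather than $\{\tilde{A}_j \le l_j\}$. The justification: a merely connected AV is carried along its uncontrolled trajectory and may happen to lie within $l_j$ of $X$ at the critical time, whereas a supervision-aware AV actively shapes its own trajectory — which it is free to do within its acceleration limits $[a_{j,min}, a_{j,max}]$ — so as to stay outside the $l_j$-neighborhood of $X$ throughout the window during which an on-ramp vehicle could occupy $X$. Given the full reaction horizon $T$ (and absorbing any actuation or reaction margin into $l_j$), such a maneuver — easing off to open a gap and yield, or, if already past $X$, simply proceeding — is always available, so that event has probability zero. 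Informally, a supervision-aware AV refuses to be the cause of a merge hazard and hence contributes nothing new to $reach_{in-ring}$.

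I expect the feasibility claim in the previous paragraph to be the main obstacle: making it fully rigorous requires ruling out degenerate configurations (for instance an AV straddling $X$ at high speed exactly as a vehicle merges), which is precisely where the conservative choice of horizon $T$ (5--8 seconds) and the freedom to fold buffers into $l_j$ are doing the work; if a purely kinematic argument is too delicate, I would instead record it as a modeling assumption on the cooperative controller. Granting it, the proof closes exactly as in \Cref{connected_av_cor}: with the $S$ AV events empty, $\{reach_{in-ring}\} = \bigcup_{i=1}^{R-S} \{H_i \le d_i(v_{i,0}, a_{i,max}, T)\}$, and Boole's inequality together with $H_i \sim U[0,c]$ gives $P(reach_{in-ring}) \le \sum_{i=1}^{R-S} d_i(v_{i,0}, a_{i,max}, T)/c$, i.e.\ the bound of \Cref{connected_av_cor} with the term $\sum_{j=1}^{S} l_j/c$ removed.
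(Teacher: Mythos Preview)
Your proposal is correct and follows essentially the same approach as the paper: argue that each supervision-aware AV's triggering event is empty (the paper phrases the feasibility via a separate planning horizon $\tilde{T}$ distinct from $T$, rather than folding margins into $l_j$, but the content is the same modeling assumption you anticipate), then apply Boole's inequality to the remaining $R-S$ HV events with $H_i \sim U[0,c]$. The paper's proof adds one paragraph you omit: an explicit description of the adversarial ``worst case'' ordering --- all HVs ahead of all AVs with non-overlapping reachable zones --- to explain why cooperative AVs cannot further tighten the bound by blocking any HV, but this is commentary on tightness rather than a step needed for the upper bound itself.
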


\begin{figure*}[thbp]
    \centering
    \includegraphics[width=1.0\textwidth]{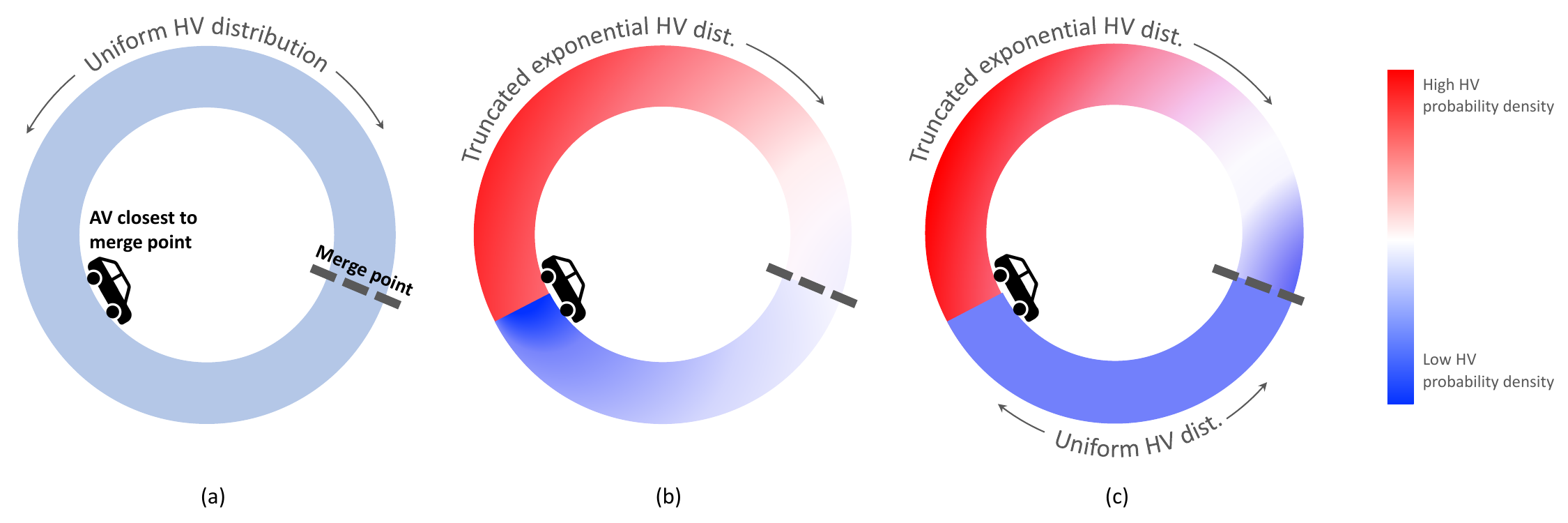}
    \caption{\textcolor{black}{Examples of three different HV distributions in the mixed HV/CCAV traffic setting. (a) is the case in Example 1, in which HVs are equally likely to be at any point in the ring, regardless of CCAV location(s). (b) considers a case from the induced traffic distribution scenario (Example 2), in which a CCAV forms a `platoon' of vehicles behind it, such that the distribution of HVs is most dense immediately after the CCAV and thereafter decays smoothly around the ring. (c) corresponds to Example 3. It recognizes that (b) is slightly adversarial: HVs in front of the lead CCAV (between the lead CCAV and the merge point), are more likely to be close to the merge point than far from it. Of course, given asymmetric traffic flows, it is unrealistic for a CCAV to determine the location of HVs in front of it. Therefore, (c) utilizes the exponential distribution for HVs behind the lead CCAV, while allowing a uniform distribution for HVs in front of it. Each subfigure here can be conceptualized as taking a vertical `slice' from its corresponding subfigure in~\Cref{fig:double_integration_uni_grad} and wrapping it around the ring road. 
    As such, each diagram here is a visualization of \textit{only one possible configuration} for each distribution; more general representations (i.e., the results if all vertical slices for each example were stacked next to each other) are shown in \Cref{fig:double_integration_uni_grad}.}}
    \label{fig:three_rings}
\end{figure*}

\begin{proof}
Again, let $H_i$ be the distance between the $i^{th}$ in-ring HV ($i=1, ..., R-S$) and the fixed merge point at the current time $t=0$. We assume $H_i \sim U[0, c]$.

 Given the fixed merge point and a perfect control of its trajectory during the length $\tilde{t}$ planning interval, so long as $\tilde{t}$ is sufficiently long, there exists a control input (sequence of accelerations over the planning horizon) such that the CCAV is not at the merge point at $t=t_f$ with probability 1. \textcolor{black}{For example, this could consist of slowing down or speeding up to create a gap for the merging AV. See more discussion of $\tilde{t}$ below.} 
 So the event $\{\text{CCAV j triggers supervision condition}\} = \emptyset$, i.e., does not occur. Hence the above upper bound improves from the previous one in~\Cref{connected_av_cor} with the $\sum\limits_{j=1}^{S} \frac{l_j}{c}$ term. 
 
 In the worst case, the order of HVs and CCAVs (with respect to the merge point) is adversarially distributed, such that (1) all the HVs are asymmetrically closer than all CCAVs to the merge point (when moving in the forward direction), and (2) all HVs have non-overlapping reachability zones. In this case the CCAVs cannot influence any HV's behavior at the merge point, so it is possible for any HV to trigger the supervision condition. Furthermore, the HVs' combined reachability zone is achieving its maximum coverage over the ring.

The event that an in-ring vehicle triggers supervision is thus due to the HVs:
\begin{equation}
    C_{\text{in}} = \cup_{i=1}^{R-S}\{\text{HV $i$ triggers sup. cond.}\}.
\end{equation}

The union bound gives 
\begin{equation}
    P(C_{\text{in}}) \leq \frac{\sum\limits_{i=1}^{R-S} d_i(v_{i, 0}, a_{i,max}, t)}{c}.
\end{equation}

\end{proof}

\begin{figure*}[t!]
    \includegraphics[width=1.0\textwidth]{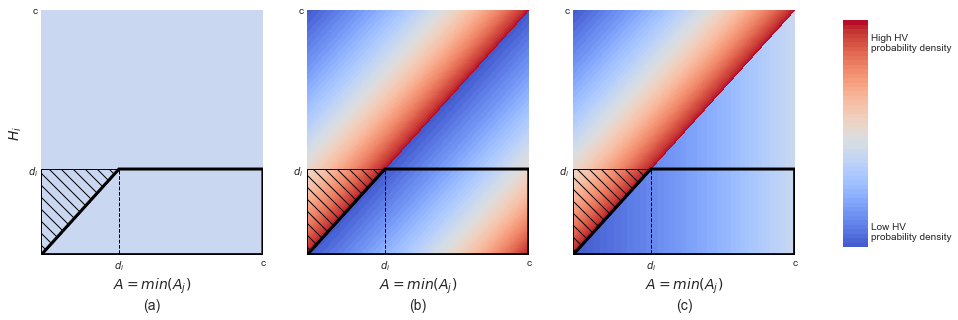}
    \caption{Visual representation of the human vehicles' conditional distributions $f_{H_i| A_{(1)}}(h|a )$ for the three examples described in Sections~\ref{subsec:in-ring-reachability-uniform-case} and~\ref{subsec:in-ring-reachability-induced-case}, along with a visualization of the double integration from~\Cref{eq:absolute_prob}. The two axes represent the random variables $H_i$ and $A_{(1)} := \min A_j$. The bold black border encapsulates the area captured by the double integral, and the shaded triangle represents the portion of the ring for which HVs that would otherwise pose a threat to a merging vehicle will be blocked. The colors represent the varying probability density. (a) The probability in the uniform distribution case (Example 1). The even tone indicates an equal likelihood of HVs at all ring positions relative to the blocking CCAV. (b) The conditional probability in the nonuniform `platoon' distribution case when HVs follow closely behind the CCAV (Example 2). The red indicates a higher likelihood of HVs immediately behind the blocking CCAV and the blue indicates a lower likelihood of HVs immediately preceding it. Thus in this second case the CCAV produces a greater supervision scaling effect in the red zone, but is also disadvantageous in that the HVs between the CCAV and the merge point are more likely to be closer to the merge point. (c) The HVs' nonuniform conditional distribution as described in~\Cref{sec:sa_case3}.
    }
    \label{fig:double_integration_uni_grad}
\end{figure*}

Note that $\tilde{t}$ is distinct from the reachability time horizon $t$, and in practice would likely be much smaller. 
$\tilde{t}$ simply corresponds to the case in which the planning interval is too short for the CCAV to avoid the merge point.
For example, if a merging AV appears on the on-ramp when an in-ring, supervision-aware CCAV is just before the merge point and moving quickly, there may not exist sufficient time for the in-ring CCAV to brake---and thus block any HVs behind it from interfering with the merge---before its momentum carries it past the merge point.
\textcolor{black}{Note also that $\tilde{t}$ should also account for reasonable response times in vehicles following the CCAV. 
Indeed, the analysis in the examples below assumes the blocking behavior occurs at a rate sufficient to allow trailing vehicles (both AVs and HVs) to respond safely.} 

\subsection{In-ring reachability for mixed HVs and CCAVs: uniform case}
\label{subsec:in-ring-reachability-uniform-case}

Recognizing that the situation described in~\Cref{lemma_RLAV_worst} is adversarial, additional improvement in supervision scalability can be obtained with a typical (less adversarial) distribution on $A_j$ and $H_i$.

In a typical mixed autonomy case, the order of HVs and CCAVs relative to the merge point is interspersed.
As the CCAVs are supervision-aware and fully cooperative, the CCAV closest to the merge point may stop to accommodate a merge, and thus any vehicle after that CCAV cannot trigger supervision. 

To model this, let us denote the order statistics $A_{(1)} \leq ... \leq A_{(S)}$.
Without loss of generality and for ease of exposition, consider the restriction to $c = 1$ and the shorthand notation $d_i \coloneqq d_i(v_{i, 0}, a_{i,max}, t)$.

We consider three different distribution schemes for the in-ring vehicles\textcolor{black}{, beginning with the uniform case described in~\Cref{subsec:in-ring-reachability-uniform-case} and continuing to two nonuniform induced traffic distributions described in Sections~\ref{subsec:in-ring-reachability-induced-case} and~\ref{subsec:in-ring-reachability-induced-case-2}.
Instances of these three example cases are shown in~\Cref{fig:three_rings}.
In the first case, HVs are equally likely to be at any point in the ring. The second and third cases consider instances in which HVs are more likely to follow closely behind a CCAV.}

\begin{theorem} \label{theorem:sa_case1}
 Assume CCAV locations are independent from each other (i.e., $A_{j} \perp\!\!\!\!\perp A_{k}$ for all $j, k \in \{1, ..., S\}$) and all $HV$s' and $CCAV$s' locations are independent $H_i \perp\!\!\!\!\perp A_j$. Also  assume each vehicle's location is distributed uniformly at random along the length of the ring, i.e. $A_j \sim U[0, 1]$ for all $j \in \{1, ..., S\}$ and $H_i \sim U[0, 1]$ for all $i \in \{1, ..., R-S\}$. Then the probability that an arbitrary point on the ring is reachable over time horizon $t$ by a vehicle is upper bounded as follows
\begin{equation}
    P(C_{\text{in}}) \leq  \frac{1-\left(1-d_i\right)^{S+1}}{S+1}
\end{equation}
\end{theorem}
\begin{proof}
Since $A_j \stackrel{i.i.d.}{\sim} U[0, 1]$ for all $j \in \{1, ..., S\}$, we have from order statistics $A_{(1)} = \min_j A_j \sim Beta(1, S)$, and the probability density function for $A_{(1)}$ is $f_{A_{(1)}}(a) = S(1-a)^{S-1}, a\in [0, 1]$, where $a$ denotes the distance of the closest CCAV from the merge point~\cite{grimmett2020}. 
As $H_i \sim U[0, 1]$, the probability density function for $H_i$ is $f_{H_i}(h) = 1, h\in [0, 1]$.
By independence, we have the joint probability density function $f_{A_{(1)}, H_i}(a, h) = f_{A_{(1)}}(a)f_{H_i}(h) = S(1-a)^{S-1}, a \in [0, 1], h \in [0, 1]$ (and 0 otherwise).

We can therefore write:

\begin{equation}
    \begin{split}
        \{\text{HV $i$ } & \text{triggers sup. cond.}\} \\
        &= \{\text{HV $i$ triggers sup. cond.} \cap H_i \leq A_{(1)}\} \\
        &\qquad \cup \{\text{HV $i$ triggers sup. cond.} \cap H_i >A_{(1)}\}
    \end{split}
\end{equation}

where $P(\{\text{HV $i$ triggers supervision condition} \cap H_i >A_{(1)}\}) = 0$ in the current setting because any HV further away from the merge point than the nearest CCAV will be blocked by that CCAV from triggering the supervisor. 
Therefore, 
\begin{equation}
    \begin{split}
        \{\text{HV $i$ triggers sup. cond.} &\cap H_i \leq A_{(1)}\} \\
        & = \{H_i \leq d_i \cap H_i \leq A_{(1)}\}.
    \end{split}
\end{equation}

One can compute $P(\{H_i \leq d_i \cap H_i \leq A_{(1)}\})$ as

\begin{equation} \label{eq:absolute_prob}
    \begin{split}
        &\hspace*{-40pt}P(\{H_i \leq d_i \cap H_i \leq A_{(1)}\})\\
        &= \int_{a=0}^{d_i}\int_{h=0}^{a} f_{A_{(1)}, H_i}(a, h) dhda \\
        &\qquad + \int_{d_i}^{1}\int_{0}^{d_i} f_{A_{(1)}, H_i}(a, h)dhda\\\
        &= \int_{0}^{d_i}\int_{0}^{a} S(1-a)^{S-1} dhda \\
        &\qquad + \int_{d_i}^{1}\int_{0}^{d_i}S(1-a)^{S-1}dhda \\
        &= \frac{1-\left(1-d_i\right)^{S+1}}{S+1}
    \end{split}
\end{equation}
\end{proof}
A visual representation of this double integration for three examples is provided in~\Cref{fig:double_integration_uni_grad}.
In both the uniform distribution case and the nonuniform `platoon' distribution case (in which a number of HVs follow closely behind a CCAV) \textit{one benefit} of CCAVs is represented by the shaded triangle.
It corresponds to the portion of the ring at which in-ring HVs' reachability zones include the merge point, but which are blocked by an in-ring CCAV, and thus do not pose a danger to the merging vehicle. 
\textit{An additional benefit} may come in the form of the distribution shift of in-ring vehicles that the CCAV can cause, illustrated in the image with the color gradients, in~\Cref{fig:double_integration_uni_grad}b.
Here, not only does the CCAV block certain HVs from threatening the merge, but it also shifts the in-ring vehicle distribution such that an HV is less likely to be in the vicinity in the first place.
This case will be analyzed further in Example~\ref{sec:sa_case2}.

\textcolor{black}{\Cref{fig:three_rings} can aid in interpreting~\Cref{fig:double_integration_uni_grad}.}
\textcolor{black}{Both images} can be interpreted geometrically by imagining the colors as representing a third dimension on the image (as if rising out of the page towards the reader).
Taking the red to represent a greater volume of probability mass, we see that the shaded triangle \textcolor{black}{in~\Cref{fig:double_integration_uni_grad}} would also have more probability mass, and thus translate into greater supervision scaling than in the uniform distribution case.
At the same time, the distribution in~\Cref{fig:double_integration_uni_grad}(b) includes an adversarial element: the HVs between the CCAV nearest the merge point and the merge point itself are more likely to be distributed closely to the merge point, which increases the odds that they trigger the in-ring reachability condition.

Previously (in the NCAV case), $P({H_i \leq d_i}) = d_i$, so the absolute improvement in each term inside the union bound is 

\begin{equation} \label{eq:absolute_improvement}
    \begin{split}
        P({H_i \leq d_i}) - P(\{H_i \leq d_i &\cap H_i \leq A_{(1)}\}) \\
        &= d_i - \frac{1-\left(1-d_i\right)^{S+1}}{S+1}
    \end{split}
\end{equation}

and \textbf{the relative improvement of each term inside the union bound is} 

\begin{equation} \label{eq:relative_improvement}
    \begin{split}
        \frac{P({H_i \leq d_i}) - P(\{H_i \leq d_i \cap H_i \leq A_{(1)}\})}{P({H_i \leq d_i})} \\
        &\hspace*{-50pt}= \frac{d_i - \frac{1-\left(1-d_i\right)^{S+1}}{S+1}}{d_i} \\
        &\hspace*{-50pt}= 1 - \frac{1-\left(1-d_i\right)^{S+1}}{d_i(S+1)}
    \end{split}
\end{equation}

where $d_i \in [0, 1]$ as we normalize $c=1$.
This monotonically increases in $d_i$ and $S$---that is, the relative improvement increases with larger reachability zones or more CCAVs. 

\begin{example}
\textbf{(Uniform vehicle distribution)}
\end{example}
As a numerical example, when $d_i=0.1$ and $S=5$, the absolute probability (given via~\Cref{eq:absolute_prob}) is 0.078.
The previous absolute probability was 0.1, and so the absolute improvement (from~\Cref{eq:absolute_improvement}) is 0.022, and the relative improvement (from~\Cref{eq:relative_improvement}) is 0.219, or 21.9\%.

Interpreting this in terms of the reachability bound
\begin{equation}
    \begin{split}
        P(C_{\text{in}}) &\leq \sum\limits_{i=1}^{S-A}\{\text{HV $i$ triggers sup. cond.}\} \\
        &= \sum\limits_{i=1}^{S-A} P(\{H_i \leq d_i \cap H_i \leq A_{(1)}\}),
    \end{split}
\end{equation}
each term inside the sum is roughly 21.9\% less than it was previously.
Thus, this effectively drops 21.9\% of terms from the set $\{1, ..., S-A\}$ that the sum is taken over, and hence produces an additional 21.9\% improvement gain relative to the upper bound.

\subsection{In-ring reachability for mixed HVs and CCAVs: induced traffic cases}
\label{subsec:in-ring-reachability-induced-case}

\begin{example} \textbf{(An induced traffic vehicle distribution)} \label{sec:sa_case2}\end{example}
As the CCAV nearest the merge point $X$ may slow down or stop the HVs behind it, consider the scenario in which HVs may be more likely to be behind---and close to---the first CCAV, recalling~\Cref{fig:double_integration_uni_grad}b.
For example, imagine a CCAV leading a platoon of HVs.

Accordingly, we model the `wrapped-around' asymmetric distance between the HV following a CCAV using a truncated $Exp(1)$ distribution with support [0,1].\footnote{Specifically, we define $X = \begin{cases} h - a &\text{ if } h > a\\ 1 - (a-h) &\text { if }0 \leq h \leq a\end{cases}$ and we assume $X \sim Exp(1)$.}
As such, the probability density function of $H_i$ given $A_{(1)}$ can be written as:
\begin{equation}
    \begin{split}
        f_{H_i|A_{(1)}}(h|a) = \begin{cases}\frac{exp(-(h-a))}{1-exp(-1)} \;\;\text{if } a \leq h \leq 1\\\frac{exp(-(h+1-a))}{1-exp(-1)} \;\;\text{if } 0 \leq h < a\end{cases}.
    \end{split}
\end{equation}

Note that the $(h + 1 - a)$ term occurs when $h < a$, and therefore corresponds to the unlikely case when an HV is closer to the merge point than a CCAV. 
In this setting, this adjustment is necessary to ensure that when $h = 0$ the probability produced is equal to the $a \leq h$ case when $h = 1$.
For a visual explanation see \Cref{fig:double_integration_uni_grad}b: this adjustment is what ensures the probability distribution smoothly 'wraps around' from the top of the diagram to the bottom.

\begin{figure*}[t!]
    \centerline{
        \includegraphics[width=0.33\textwidth]{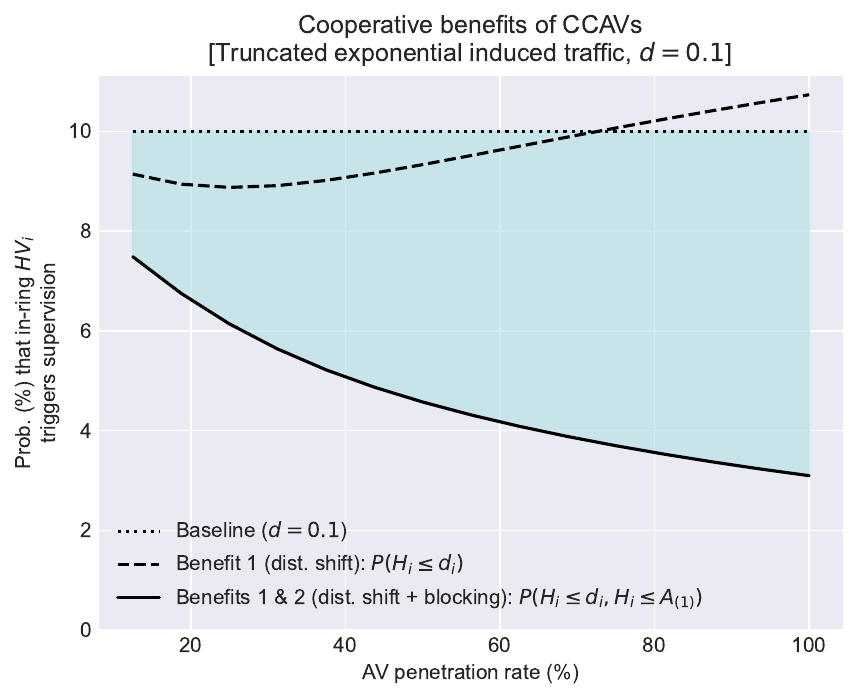}
        \includegraphics[width=0.34\textwidth]{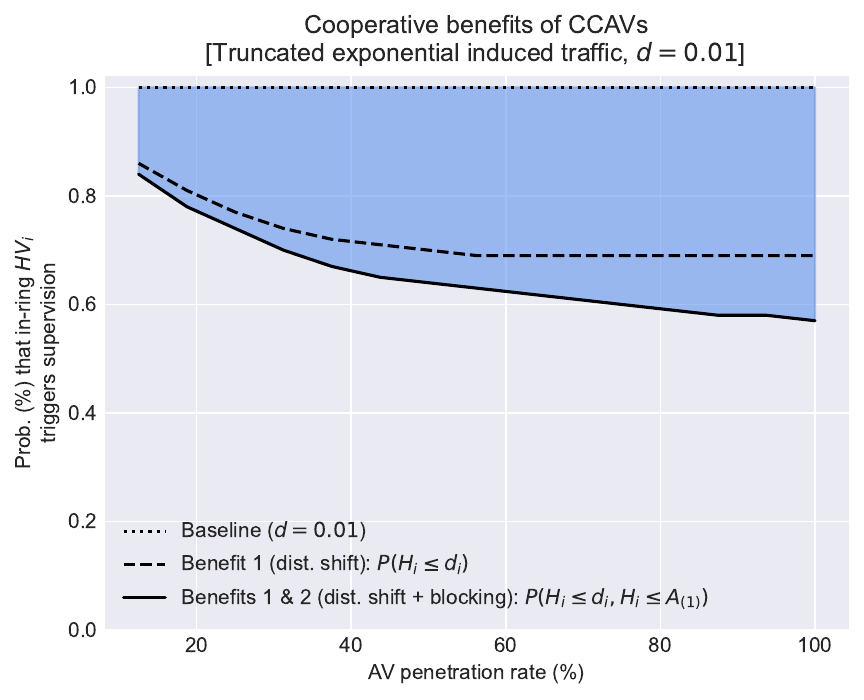}
        \includegraphics[width=0.32\textwidth]{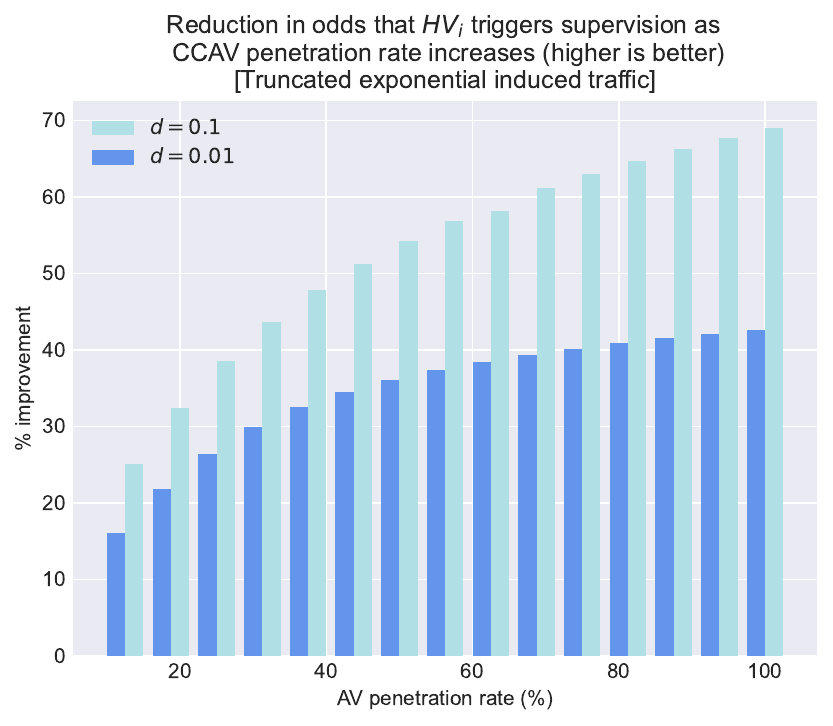}
    }
    \caption{Visualization of the \textcolor{black}{cooperation-based} benefits of CCAVs for the situation described in~\Cref{sec:sa_case2}. The left two plots show the probability that the merge point is within the reachable zone of $HV_i$ in the \textcolor{black}{UCAV and NCAV} cases (dotted line) and in the case in which CCAVs are present (solid line, $P(H_i \leq d_i, H_i \leq A_{(1)})$) for two different $d$ values. \textcolor{black}{The dashed line removes the blocking effect for comparison.} The CCAVs' relative improvement over the \textcolor{black}{UCAV and NCAV} cases is shaded in the two left plots and explicitly represented with corresponding colors in the bar chart. \textcolor{black}{Note these plots consider a single hypothetical HV---not the system's overall probability of triggering supervision.}
    See~\Cref{appsec:tables} for the source data. 
    }
    \label{fig:truncatedexpdist_results}
\end{figure*}

\begin{figure*}[t!]
    \centerline{
        \includegraphics[width=0.33\textwidth]{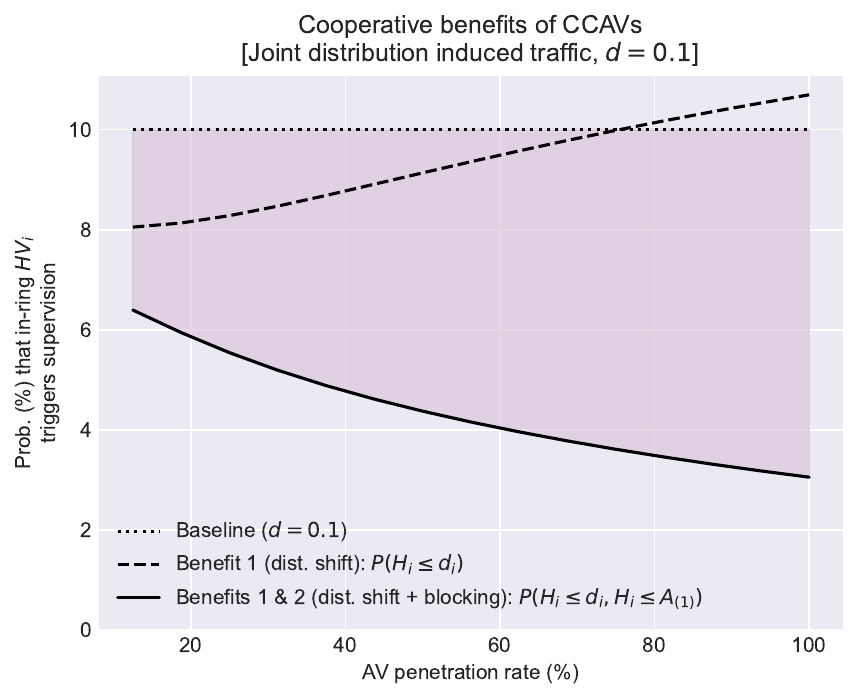}
        \includegraphics[width=0.34\textwidth]{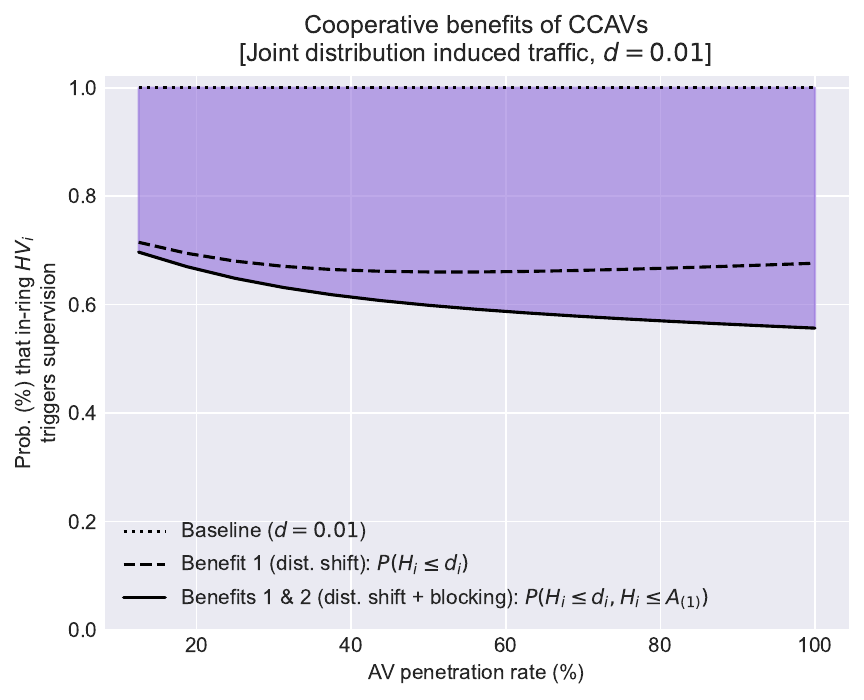}
        \includegraphics[width=0.32\textwidth]{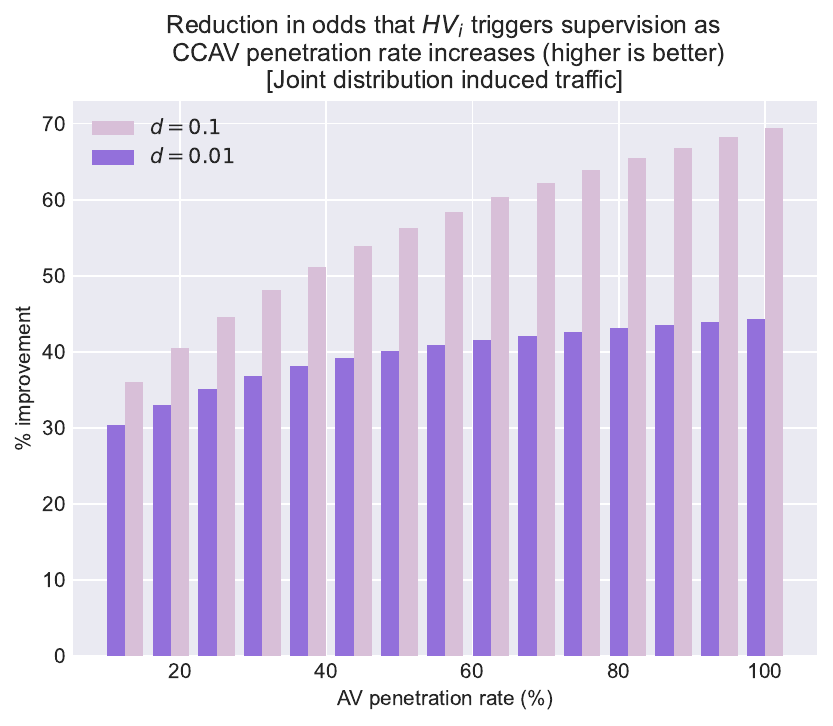}
    }
    \caption{Visualization of the \textcolor{black}{cooperation-based} benefits of CCAVs for the situation described in~\Cref{sec:sa_case3}, similar in format to the above figure. The dashed line represents the gains (relative to the dotted line) that the in-ring distribution shift incurred by CCAVs produces. The solid line compounds that with the benefit from their ability to sometimes block HVs that would otherwise trigger supervision. See~\Cref{appsec:tables}'s tables for the source data.  
    }
    \label{fig:jointdist_results}
\end{figure*}

For tractable computation assume $f_{A(1)}(a) = \frac{Sexp(-Sa)}{1-exp(-S)}$ to be the truncated $Exp(S)$ random variable on [0,1].
The resulting joint distribution is

\begin{equation}
    \begin{split}
        f_{A_{(1)}, H_i}(a, h) &= f_{H_i|A_{(1)}}(h|a)f_{A_{(1)}}(a) \\
        &= \begin{cases}\frac{exp(-(h-a))}{1-exp(-1)}\frac{Sexp(-Sa)}{1-exp(-S)} \;\;\text{if } a \leq h \leq 1\\\frac{exp(-(h+1-a))}{1-exp(-1)}\frac{Sexp(-Sa)}{1-exp(-S)} \;\;\text{if } 0 \leq h < a\end{cases}.
    \end{split}
\end{equation}

We can compute the probabilities of interest using the same integration as in Case 1, but with a different joint probability density function. Closed-form probabilities can be obtained as
\begin{align}
    P(\{H_i \leq d_i \cap H_i &\leq A_{(1)}\}) \nonumber\\
       &= \frac{e^{(S - d_i S)} (e^{d_i S} -e^{d_i}S+ S-1)}{(e-1) (e^S-1) (S-1)} \\
       &\qquad + \frac{(e^d_i-1)S (e^{d_i + S} - e^{d_i S+1})}{(e-1) (e^S-1) (S-1)}\nonumber\\
    P(\{H_i \leq d_i \cap H_i &> A_{(1)}\}) \nonumber\\
        &= \frac{e^{S+1}( e^{-d_i S} - e^{-d_i}S + S - 1)}{(e-1)(e^S-1)(S-1)}\\
    P(\{H_i \leq d_i\})\hspace{13pt}&\nonumber\\
        &= P(\{H_i \leq d_i \cap H_i \leq A_{(1)}\}) \\
        &\qquad + P(\{H_i \leq d_i \cap H_i > A_{(1)}\})\nonumber
\end{align}

As a numerical example, with $d_i=0.1$, we obtain a \raisebox{0.5ex}{\texttildelow}44\% improvement from the previous upper bound when $S=5$ and a \raisebox{0.5ex}{\texttildelow}68\% improvement when $S=15$.

See~\Cref{fig:truncatedexpdist_results} for the numerical results with $d_i = 0.1$ and $d_i = 0.01$ respectively in a setting with 16 in-ring vehicles total. 
\textcolor{black}{Although no in-ring HV would actually be present in the 100\% AV penetration setting, values are reported for illustrative purposes.}
The data used for the plots is included in~\Cref{tab:numerical_probabilities21} and~\Cref{tab:numerical_probabilities22} in the appendix.
In reality, the true distribution is likely to lie somewhere between the two cases outlined in this section. 
Note the relative improvement column in the tables is computed by comparing the $P(H_i \leq d_i, H_i \leq A_{(1)})$ column as AV penetration rate increases with $P(H_i \leq d_i) = d_i$, which corresponds to the case in \textcolor{black}{which the probability of a given HV's reachability zone intersecting the merge point is unaffected by the CCAVs.}
\textcolor{black}{To illustrate the relative impact of the CCAVs' blocking benefit relative to their distribution shift benefit, we ablate the effect of blocking.} 

\begin{example} \textbf{(A more realistic CCAV case)} \label{sec:sa_case3}\end{example}
\subsection{Supervision requirements for a single rotary}
\label{subsec:in-ring-reachability-induced-case-2}
\begin{figure*}[tbhp]
    \centering
    \includegraphics[width=0.9\textwidth]{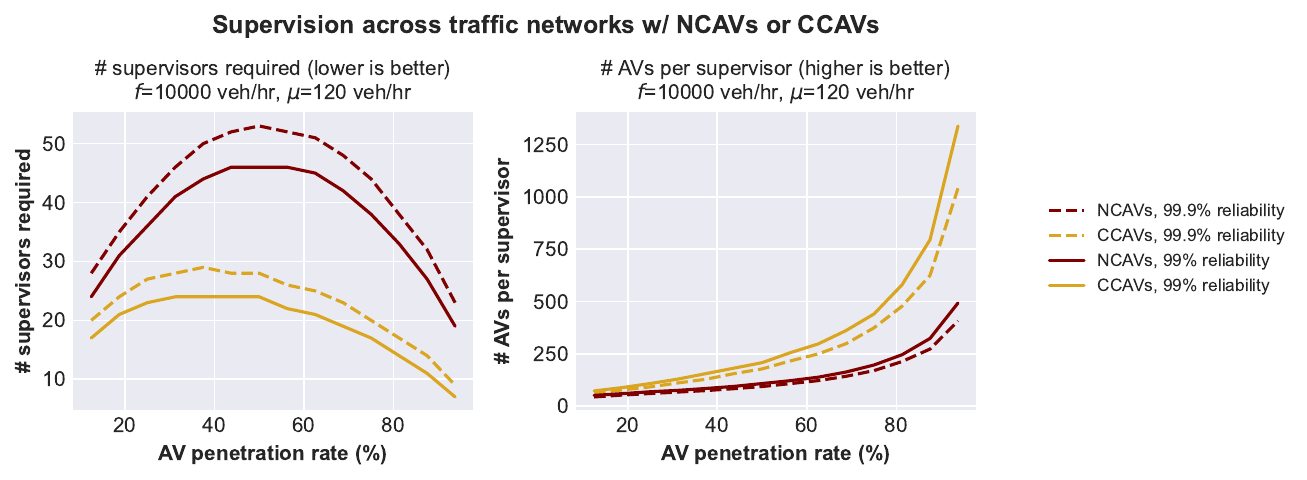}
    \caption{With connected AVs the number of supervisors necessary to achieve desired high \textcolor{black}{reliability} thresholds no longer increases monotonically with rising AV penetration rates, as occurred in the UCAV (baseline) case shown in~\Cref{fig:geneqn_num_sup_num_avs_single}. The CCAVs substantially outperform the NCAVs. In fact, the CCAV case exhibits an overall trend inversely related with increasing AV penetration. Increasing AV presence boosts the number of AVs requiring supervision at merge time, but also enables AVs to have greater supervision-reducing effects within the ring.}
    \label{fig:network_req_sup_and_avs_single}
\end{figure*}

\begin{figure*}[tbhp]
    \includegraphics[width=0.9\textwidth]{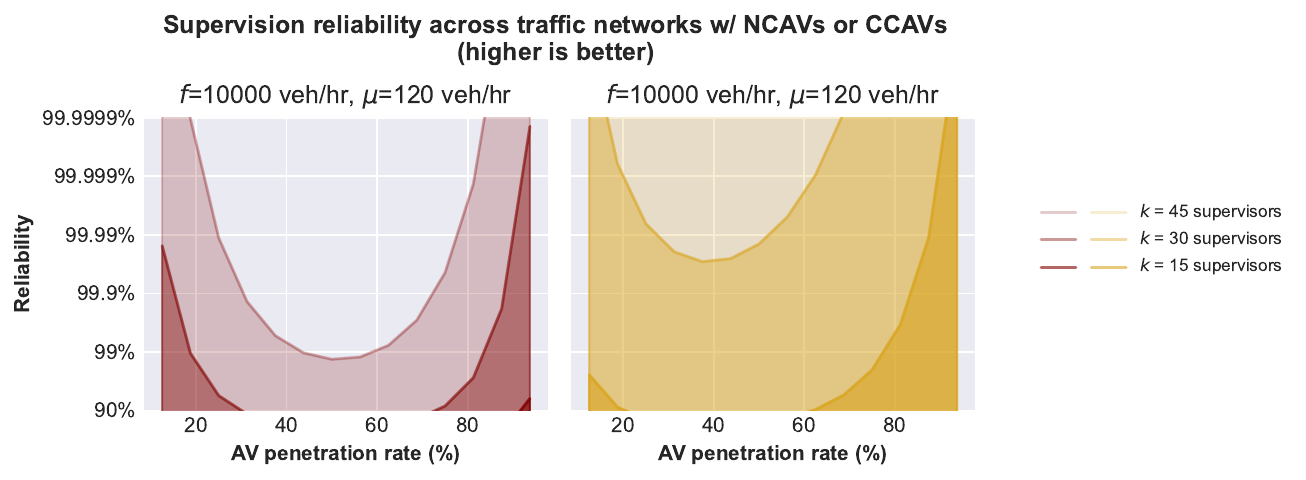}
    \caption{A visualization of the \textcolor{black}{reliability} thresholds NCAVs (left) and CCAVs (right) can achieve at a given task arrival and service rate. \textcolor{black}{Other rates exhibit similar trends; see~\Cref{subsec:traffic_network_arrays}.} 
    Comparing these plots---while making special note of the logarithmic y-axis---illustrates how CCAVs enable a fixed number of supervisors to achieve a substantially higher \textcolor{black}{reliability} threshold.
    }
    \label{fig:network_conf_both_single}
\end{figure*}

In the above example we assume both the HVs behind and in front of the first CCAV follow a truncated exponential distribution. Such a distribution is slightly unrealistic (and slightly adversarial) for HVs in front of the first CCAV because it implicitly says that those HVs have a greater probability of being closer to the merge point than far away (that is, closer to the first CCAV). Recall that we define the asymmetric distance following the traffic flow direction. A less adversarial distribution would say the HVs before the first CCAV follow a uniform distribution between that CCAV and the merge point. In fact, here we consider such a joint distribution for HVs and CCAVs, where the conditional probability density function of $H_i$ given $A_{(1)}$ is

\begin{equation}
    \begin{split}
        f_{H_i|A{(1)}}(h|a) = \begin{cases}\frac{exp(-(h-a))}{1-exp(-1)} \;\;\text{if } a \leq h \leq 1\\\frac{e^a-1}{a(e-1)} \;\;\text{if } 0 \leq h < a\end{cases}
    \end{split}
\end{equation}
where $\frac{e^a-1}{a(e-1)}$ is set so that the conditional probability $f_{H_i|A{(1)}}(h|a)$ integrates to 1 for each $a$. That is, this is a truncated $Exp(1)$ random variable on [0,1] where the random variable is the asymmetric distance from an HV to a CCAV (following the traffic flow) if the HV is behind the first CCAV, but conditionally uniform if the HV is before the first CCAV. 
A visual representation of this conditional probability density function can be seen in~\Cref{fig:double_integration_uni_grad}(c).

Again, for tractable computation assume $f_{A_{(1)}}(a) = \frac{Sexp(-Sa)}{1-exp(-S)}$ to be the truncated $Exp(S)$ random variable on [0,1].
The resulting joint distribution is

\begin{equation}
    \begin{split}
        f_{A_{(1)}, H_i}(a, h) &= f_{H_i|A_{(1)}}(h|a)f_{A_{(1)}}(a) \\
        &= \begin{cases}\frac{exp(-(h-a))}{1-exp(-1)}\frac{Sexp(-Sa)}{1-exp(-S)} \;\;\text{if } a \leq h \leq 1\\\frac{e^a-1}{a(e-1)}\frac{Sexp(-Sa)}{1-exp(-S)} \;\;\text{if } 0 \leq h < a\end{cases}.
    \end{split}
\end{equation}

The results of this computation are shown in \Cref{fig:jointdist_results} and the corresponding tables in the appendix. 
~\Cref{tab:numerical_probabilities31} contains the result when fixing $d_i=0.1$.
Table~\ref{tab:numerical_probabilities32} contains the corresponding result when fixing $d_i = 0.01$.

We can see that the relative improvements are larger than those in Example~\ref{sec:sa_case2}, as the distribution of HVs before the first CCAV is less adversarial.

We can leverage this work detailing the distribution of in-ring vehicles---and more specifically, the $P(H_i \leq d_i, H_i \leq A_{(1)})$ values, from which we construct $P(C_{\text{in}})$ and therefore $\lambda$---with~\Cref{eq:pm} to get more direct insight into quantities and trends of interest.

\subsection{Supervision requirements for a traffic network}
\label{subsec:sup_req_for_traffic_network}
The analysis to this point has been centered on a single ring road, and the results for that setting are included in the Appendix. 
However, returning to our objective of asking, ``How many supervisors are necessary?", particularly in the setting in which a team of human supervisors remotely supervise a fleet of AVs, we are interested in a broader traffic network. 

Using our kinematics-based reachability distance and the on-ramp density for interstate highways from~\cite{pilko2007safety}, we find that reasonable values for $d$ range from 0.08 to 0.11 in Washington and California interstate systems, and thus will proceed with $d=0.1$ for this section of the analysis.
However, we observe that the precise value of $d$ to use will vary according to context and desired analysis; still, the trends outlined below persisted for a range of values. 
The CCAV $P(C_{\text{in}})$ values are thus drawn from~\Cref{tab:numerical_probabilities31}.
On-ramp flow rates $f$ can range from 0 to $\sim$1,900 veh/hr before single-lane ramps become saturated, and the mean time for a supervisor to monitor a merge is estimated to be between 15 and 60 seconds, including the time to safely gain situational awareness and assume effective control of the vehicle remotely from the autonomous driver~\cite{daw2019beyond, hess1963capacities}.

\textcolor{black}{Thus, we set $f$ = [7,000, 10,000, and 17,000] for the plots in the Appendix and highlight $f$ = 10,000 for our initial plots. This value can equivalently refer to a network with 10 on-ramps each producing 1,000 veh/hr, or 100 on-ramps each producing 100 veh/hr flowing onto a free-flowing interstate.}

\textcolor{black}{
Results are presented in Figures~\ref{fig:network_req_sup_and_avs_single} and~\ref{fig:network_conf_both_single}.
Additional results for other inflow and service rates (demonstrating trend generality) are shown in the Appendix in Figures~\ref{fig:network_req_sup_array}, \ref{fig:network_num_avs_array}, \ref{fig:network_conf_noncoop_array}, and~\ref{fig:network_conf_coop_array}.
}
We see that CCAVs enable supervision time that is on the whole inversely related with AV penetration, and that the number of AVs each supervisor can monitor at a given reliability threshold monotonically increases along with the AV penetration rate. 
The plots also illustrate the extent to which cooperation among the connected AVs boosts \textcolor{black}{reliability} in supervision.
In particular, note the logarithmic scale on the y-axis in these plots. 
The `\textcolor{black}{reliability}' metric is calculated as $1 - P_k$. This means that at a \textcolor{black}{reliability} threshold of 99.9\%, 1 in 1,000 vehicles---on average---for which the reachability conditions require supervision would not be able to receive supervision (because there is not a supervisor available to immediately provide it). 
Observe that this does not necessarily mean a crash would occur, given the conservative nature of the reachability constraints. Future work via microsimulation could gather data for analysis of this. 

\textcolor{black}{Of particular note is the improvement of the CCAV case over the NCAV case.
~\Cref{fig:highlight_example} illustrates how }\textcolor{black}{CCAVs can provide orders of magnitude more \textcolor{black}{reliability} without additional human supervisors (note the log scale).}

The shape of the various curves can be explained by three interacting effects as the AV penetration rate increases. 
\textcolor{black}{See~\Cref{fig:interacting_effects}.}
Exerting upward pressure on the number of supervisors required to maintain a given \textcolor{black}{reliability} level as AV penetration increases is the growing number of merging AVs that present potential supervision needs.
Exerting downward pressure for both the noncooperative and cooperative connected AV cases is the fact that as AV penetration increases, a greater proportion of vehicles have the truncated reachability zones that connectivity grants, reducing the average area of the highway that is covered by reachability zones. 
Finally, for the CCAVs, there exist the benefits discussed in the presentation of the other plots: (a) the ability to shift the in-ring vehicles' collective distribution such that each vehicle's reachable zone is more likely to overlap with that of another vehicle, thus leaving more of the ring open to accommodate merges, and (b) the ability of CCAVs to block would-be dangerous HVs from encroaching on merging AVs as they join the highway flow. 
\textcolor{black}{This final effect contributes both to the goal of reducing the number of necessary human interventions and reducing the time required for those that must occur.
The truncated reachability effect contributes to the goal of reducing the number of human interventions.} 
The curves presented in all four figures can be understood in these terms. 

\textcolor{black}{We observe that the `dip' in the reliability curves consistently occurs at a similar point, indicating the mixed autonomy cases with relatively equal proportions of human and autonomous vehicles provide the most challenging supervision scenarios.
Also notable is that the dip shifts leftward in the CCAV setting relative to the NCAV setting.
This phenomenon is particularly visible in the reliability plots in the Appendix, such as Figures~\ref{fig:network_conf_noncoop_array} and~\ref{fig:network_conf_coop_array}.
This may have to do with the nonlinear influence of vehicles on others in the system and is an open avenue for future study.}

While the values here are representative of a network larger than a single rotary, the plots in the Appendix illustrate that similar qualitative behavior occurs even on the much smaller case of a single rotary with four on- and off-ramps. 

\begin{figure} [t]
    \centering
    \includegraphics[width=0.45\textwidth]{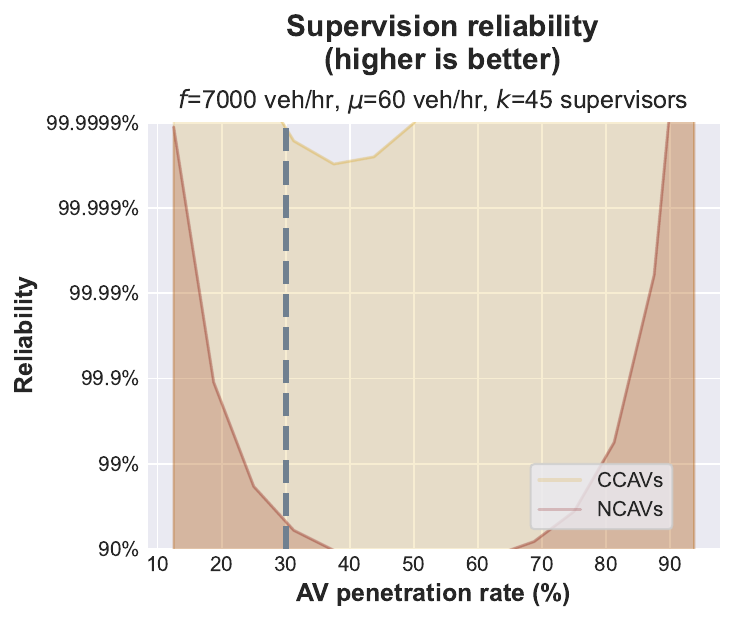}
    \caption{\textcolor{black}{Under connected AV cooperation, the operator team successfully covers 99.9999\% of cases when the AV penetration rate is 30\%. In contrast, without connected AV cooperation, the same team covers \textcolor{black}{just over} 90\% of cases.} 
    }
    \label{fig:highlight_example}
\end{figure}

\begin{figure*} [t]
    \centering
    \includegraphics[width=1.0\textwidth]{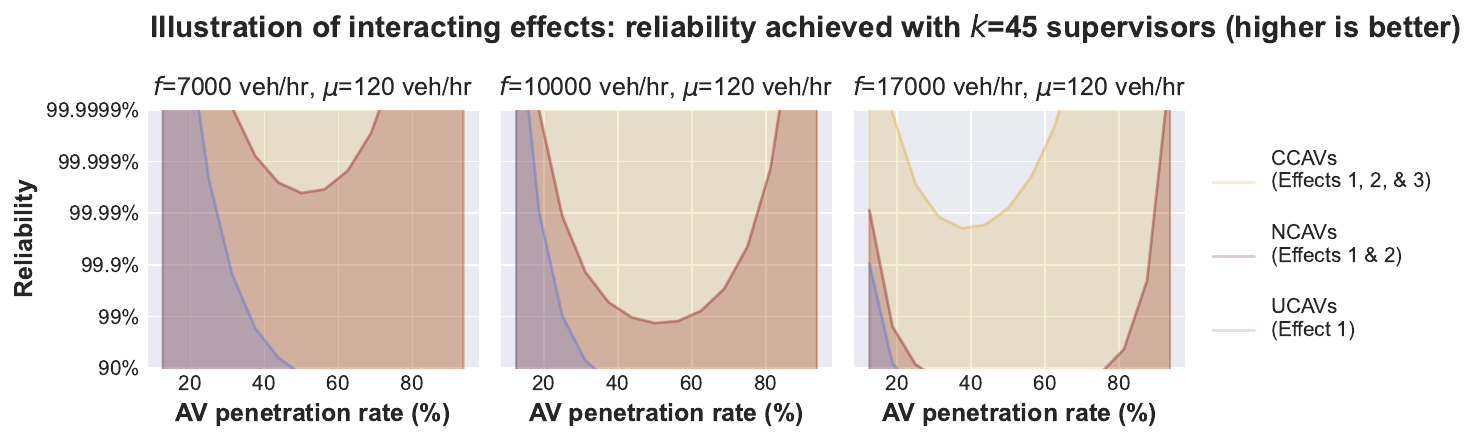}
    \caption{\textcolor{black}{As the AV penetration rate increases, three notable factors affect the system’s supervisability: (1) more AVs to supervise, (2) a greater share of vehicles with truncated reachability zones, and (3) more AVs to assist merges. (1) makes supervision more difficult. (2) and (3) ease supervision. The unconnected AV curve (in blue) is affected only by 1, as the AVs neither assist merges nor have truncated reachability zones. The NCAV curve (in red) shows what happens with only 1 and 2, since NCAVs do not assist merges. The CCAV curve (in yellow) is affected by 1, 2, and 3. \textcolor{black}{Recall that Effect 3 is the composite of the two benefits described in Figures~\ref{fig:truncatedexpdist_results} and~\ref{fig:jointdist_results}.}} 
    }
    \label{fig:interacting_effects}
\end{figure*}

\section{Implications for Autonomous Driving}
\label{sec:problemscoping}

It is worth pausing for a moment to appreciate the broader context of these improvements and the leveraging effect that can be achieved in the traffic setting by adopting autonomy with supervision assistance for the driver. 
\textcolor{black}{The results suggest preventing the need for supervision is more powerful than having more supervisors.}
Not only does this provide further motivation for investigating scalable supervision of mixed autonomy systems in the specific context of AVs, but it also serves as a case study for determining the potential for supervision scaling to achieve practical gains in a given setting.
In so doing, this analysis provides a framework that could be used to evaluate other tasks for scalable supervision.

\subsection{Improved safety and user experience}
AVs provide a compelling case for investigating scalable supervision because short, particularly dangerous events---such as merging or navigating a construction zone---are interspersed within longer, easier events---such as cruising a highway. 
Imagine a future where \textcolor{black}{AVs can safely handle mundane driving} like occurs on an interstate highway.
An intervention that prevents a human driver from lifting a finger during a five-seconds-long merge may thus not only spare the person interruption for those five seconds, but indeed may have in effect spared her interruption over 20 minutes if the merge interlinked 10-minute stretches on two separate highways. 

Certainly, unforeseen events can occur at any point in the driving process, but that is beyond the scope of this research, which focuses on the illustrative and pre-specified event of merging. 
Other instances which present increased risks or unpredictability but can be anticipated include construction zones, school zones, corridors particularly congested with foot traffic, and known traffic accident hotspots~\cite{erdogan2008geographical}. 

\subsection{Informed fleet operations}
Beyond the improved driving experience and safety for an individual, in the further future such scaling might be a crucial element enabling the widespread deployment of autonomous vehicles, and thus crucial for achieving the environmental and social benefits outlined previously. 
Consider a company that seeks to provide a fully hands-off driving experience. 
Even if the company's AVs are generally quite effective at navigating roadways, given the difficulty of some tasks like merging and consumers' desire for additional safety, the firm may look for additional safety measures. 
If it were sufficiently economical (and assuming low-latency connections, etc.), one way to do so might be to hire remote supervisors to monitor the AV fleet and provide real-time control when necessary. 
Thus, the question of interest for this company would not only be reductions in the number of interventions and the time required over some baseline, but how many remote supervisors need to be hired as a whole to supervise the entire fleet of AVs. 

One aim of this work is to provide rigorous methodology for informing such estimates by refining the estimated time required for supervision of the AVs. When tailored to the locale and services of an AV operator, the outcome of the proposed methodology could be incorporated into an estimate for fleet operations.

Based on some moderate assumptions, we might expect that---even if we were to require that a supervisor monitor \textit{each and every} AV merge, regardless of reachability conditions and without any of the scaling techniques used in this work---one of these remote supervisors might be able to supervise between 18 and 66 AVs for the task of highway merging in the United States. 
The logic behind this estimate is as follows. 
In the US approximately 33\% of total vehicle miles traveled (VMTs) occur on interstates, other freeways, or expressways, where ``access and egress points are limited primarily to on- and off-ramps''~\cite{mcguckin2018summary, FHA2018_VMT, usdot_2015status}.
Making the conservative assumptions that on these roads the average speed is 50mph and the average trip length is 15 miles, the total time for each trip is 0.3 hours.
If the time it takes for a supervisor to assume control and execute a merge task safely is 60 seconds, and assuming one merge per trip, then a supervisor is needed for 5.55\% of every trip. 
Taking the inverse of this, we see that---if tasks are allocated without gaps or delays---one supervisor could theoretically monitor merges for 18 highway vehicles, even assuming that every merge must be supervised. That is, this value is \textit{without} any of the scalable supervision benefits discussed in this work.
Since VMTs on interstates, other freeways, and expressways account for only a third of all VMTs, one supervisor operating in such a manner could supervise up to 52 vehicles (since the majority never make a highway merge).
Of course, the scaling potential is substantially greater once we allow for the supervision methods outlined previously.

\section{Conclusion}

This work explored \textcolor{black}{an operational reliability} perspective on improving safety in mixed autonomy settings by investigating human supervision of AVs in a simulated merging task.
We investigated the question: Can we do better than the present-day industry standard of persistent supervision? 
Our findings indicate that clever allocation of human resources for supervision tasks may ease near-term adoption of imperfect autonomous agents in safety-critical environments.
\textcolor{black}{Key conclusions include:
\begin{itemize}
    \item AV teaming can enable supervision that is both safe and scalable, \textcolor{black}{with orders-of-magnitude improvement in supervision requirements. This cooperation may enable autonomous systems' deployment at scale in certain settings.} 
    \item Queuing-theoretic analysis and order statistics allow for the establishment of high \textcolor{black}{reliability} upper bounds on human supervision requirements in the given setting. 
    \item Scalable supervision provides a method for achieving human-level safety even when autonomous system performance guarantees are elusive.
\end{itemize}} 
Future work could use traffic microsimulators to empirically investigate the distribution produced, and could also evaluate the impact of the supervision-aware AVs' cooperative behavior on traffic flows.
These simulations could extend to investigate how learning can be incorporated to exhibit behaviors similar to---or even superior to---those discussed in this work.
More generally, continued research in this direction could adopt alternative reachability analysis tools to investigate supervision scaling's potential in other mixed autonomy settings.

\section*{Acknowledgments}
We are grateful to Dr. Eric Horvitz for inspiring our exploration of AV teaming.
We would also like to thank Zhongxia Yan for feedback and assistance with network design and Jiaqi Zhang for discussion about and assistance with elements of analysis.
\textcolor{black}{This work was partially supported by the MIT Amazon Science Hub, MIT-IBM Watson AI Lab, MIT Energy Initiative (MITEI) Mobility Systems Center, MIT Mobility Initiative, and National Science Foundation (NSF) under grant number 2149548.}

{\appendix[]
\section*{Useful Statements}
We provide useful probability statements that we adopt in our proofs. The statements can be found in a standard probability textbook~\cite{grimmett2020}.

\subsection{Binomial distribution parameterized by a Poisson random variable} 

\begin{equation} \label{poisson_binomial_statement}
    \begin{split}
        \text{Let $N \sim Poisson(\lambda), X | N $} &\text{$\sim Binomial(N, p)$,} \\
        &\text{then $X \sim Poisson(\lambda p)$.}
    \end{split}
\end{equation}

\subsection{Poisson distribution of summed independent random variables} 

\begin{equation} \label{poisson_random_var_sum_statement}
    \begin{split}
        \text{Let $X \sim Poisson(\lambda), Y$ } &\text{$ \sim Poisson(\mu)$ and $X, Y$ } \\
        &\hspace*{30pt}\text{are independent.} \\
        &\hspace*{-105pt}\text{Then $X + Y \sim Poisson(\lambda + \mu)$.}
    \end{split}
\end{equation}

\subsection{Plots for understanding the relationships expressed in~\Cref{theorem_queue}}
\textcolor{black}{
See Figures~\ref{fig:geneqn_num_sup_array}, \ref{fig:geneqn_avs_per_sup_array}, and~\ref{fig:geneqn_confidence_array}. These illustrate similar trends for a variety of configurations.}

\begin{figure*}[tbhp]
    \centering
    \includegraphics[width=0.85\textwidth]{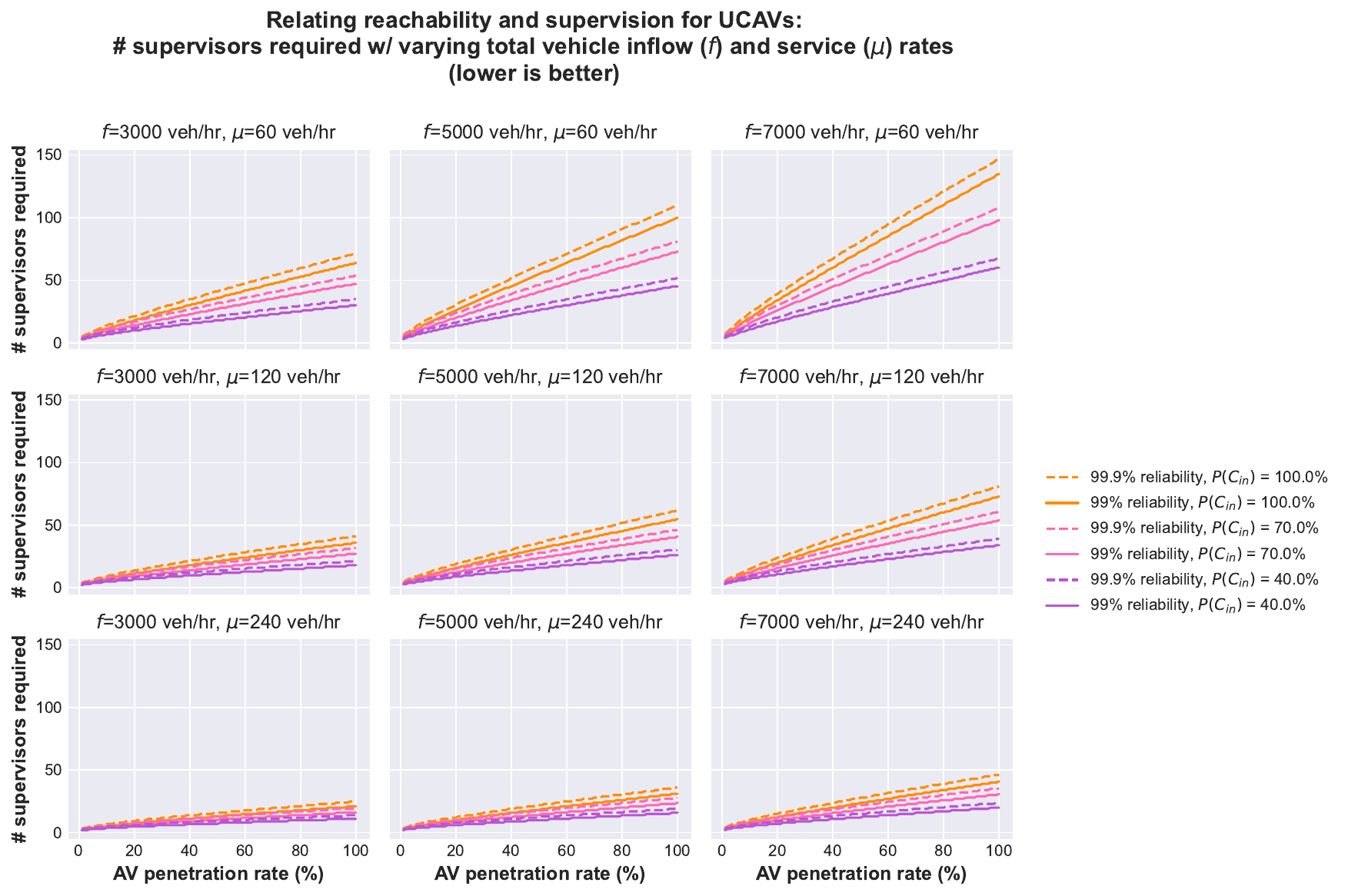}
    \caption{\textcolor{black}{Results described in~\Cref{fig:geneqn_num_sup_num_avs_single} (UCAV case) for a variety of total vehicle inflow rates $f$ and service rates $\mu$.
    Note the trends are similar across configurations.}
    The values for $f$ are feasible light-traffic on-ramp values in a traffic system and the $\mu$ values represent monitors that can supervise a merge in an average of 60, 30, or 15 seconds, respectively. Further details on parameter selection are included in the main text of the paper.
    } 
    \label{fig:geneqn_num_sup_array}
\end{figure*}

\begin{figure*}[tbhp!]
    \centering
    \includegraphics[width=0.85\textwidth]{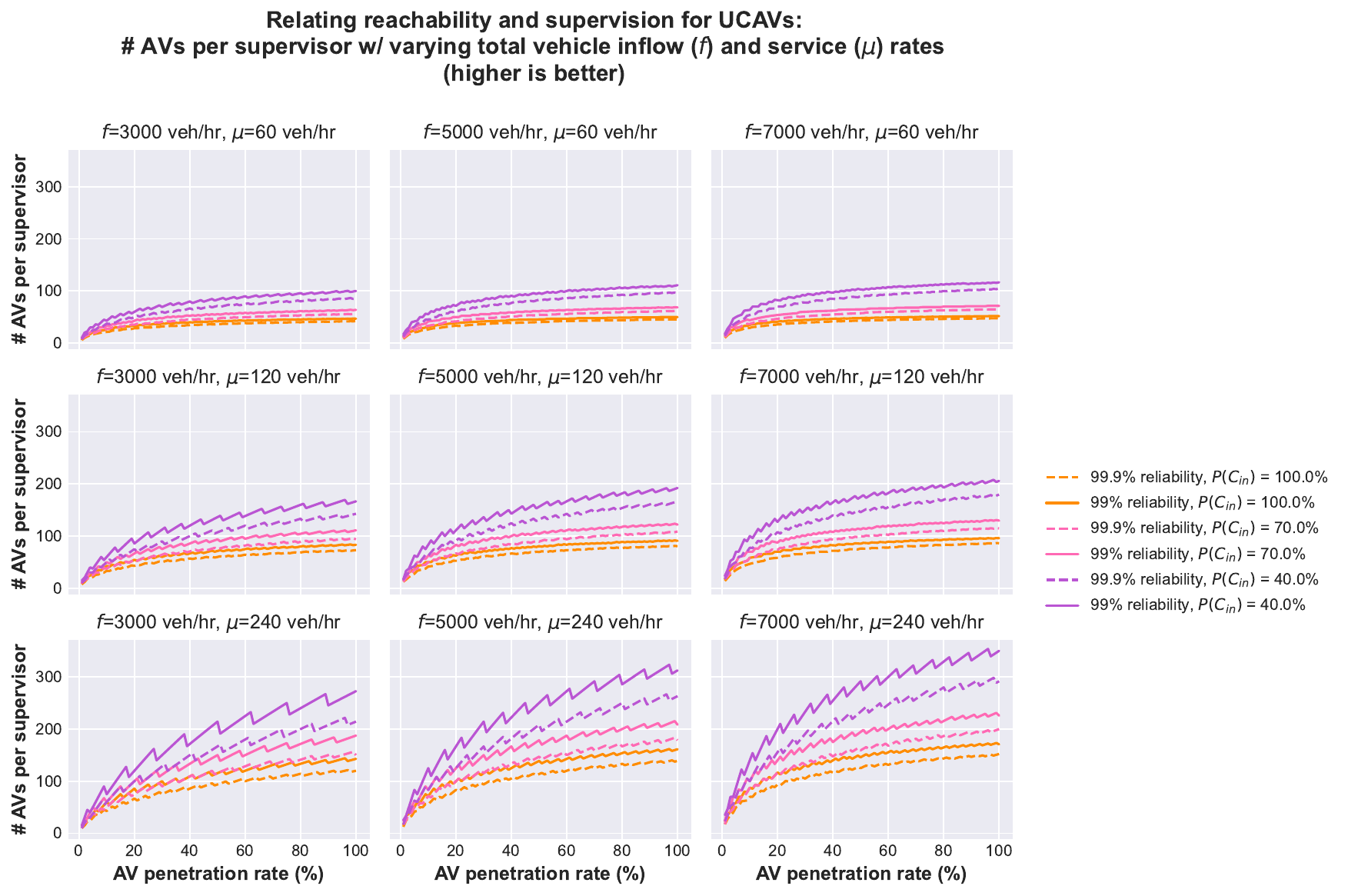}
    \caption{An illustration of how the number of AVs per supervisor scales with increasing AV saturation for the `baseline' case in which there is no merge assistance from noncooperative or cooperative connected AVs. \textcolor{black}{Trends are similar across configurations.}}
    \label{fig:geneqn_avs_per_sup_array}
\end{figure*}

\begin{figure*}[tbhp!]
    \centering
    \includegraphics[width=0.85\textwidth]{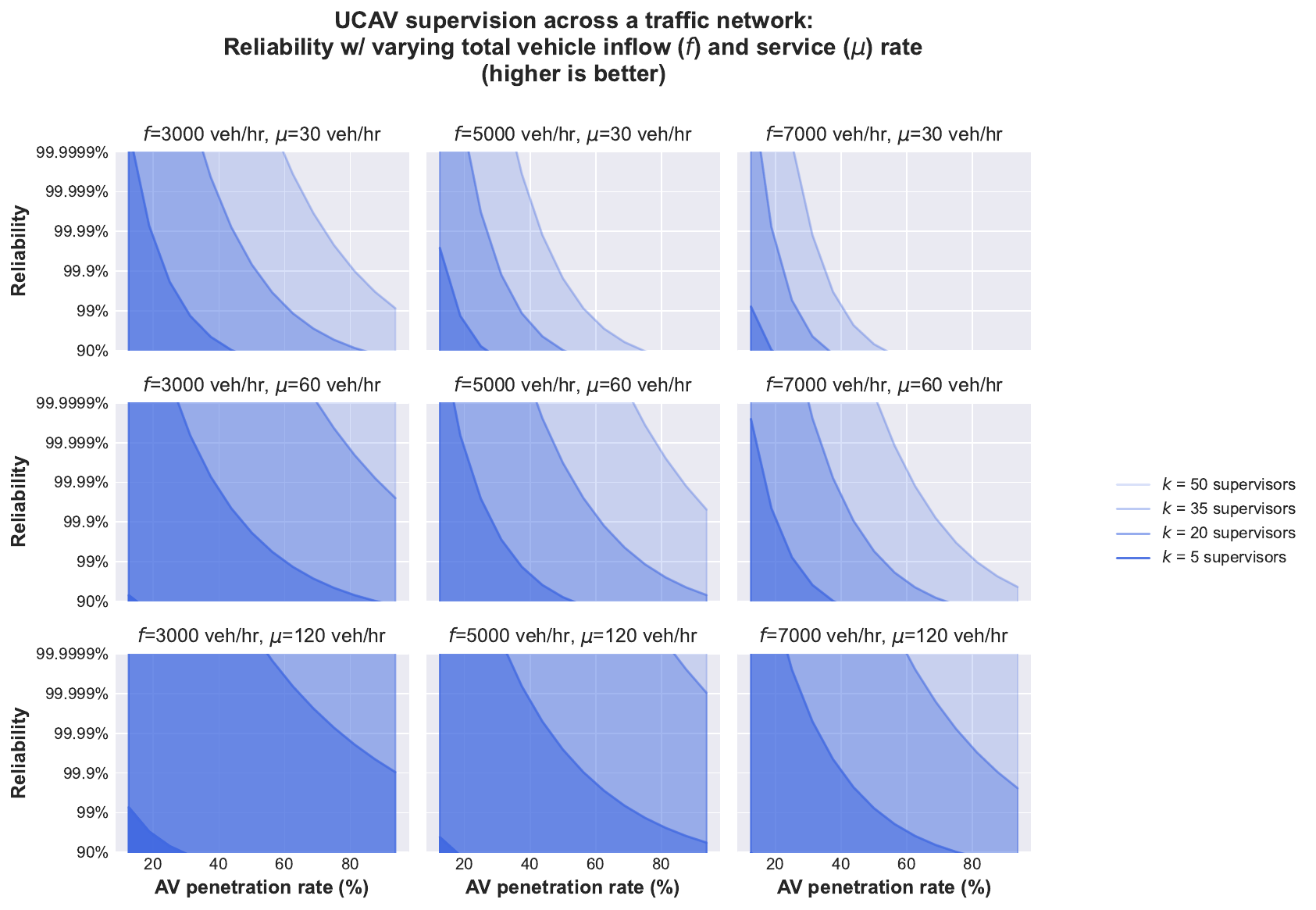}
    \caption{In the baseline case, as AV penetration increases, the system's ability to maintain adequate supervision drops. Note the logarithmic y-axis. For these plots $P(C_{\text{in}}) = 40\%$; the trends hold more generally.}
    \label{fig:geneqn_confidence_array}
\end{figure*}

\subsection{Tables with numerical data for examples in~\Cref{subsec:in-ring-reachability-induced-case}}
\label{appsec:tables}
See~\Cref{tab:numerical_probabilities21} through~\Cref{tab:numerical_probabilities32}.

\begin{table}[thbp]
    \centering
    \caption{Data for the case described in~\Cref{sec:sa_case2}. Relative reduction in supervision time for $S$ CCAVs relative to connected AV baseline $P(H_i \leq d_i) = 0.1$ for non-uniform distribution when $d_i=0.1$ and the total number of in-ring vehicles is 16. The column second from the left is an indicator of the distribution shift caused by the CCAVs. The column second from the right is the overall absolute supervision probability that the CCAVs typically achieve, and is thus a function of both gains via the implicit distribution shift indicated in the prior column and gains via the cooperative blocking behavior itself. The rightmost column indicates the relative \% improvement over the upper bound $d_i = 0.1$.}  
    \label{tab:numerical_probabilities21}
    \begin{tabular}{|l|c|c|p{0.16\linewidth}|}
    \hline
    \rule{0pt}{2ex} $S$ & $P(H_i \leq d_i)$ & $P(H_i \leq d_i, H_i \leq A_{(1)})$ & Relative improvement\\
    \hline
    2 & 0.0914 & 0.0749 & 25.12\%\\
    3 & 0.0894 & 0.0675 & 32.46\%\\
    4 & 0.0888 & 0.0614 & 38.60\%\\
    5 & 0.0891 & 0.0564 & 43.64\%\\
    6 & 0.0902 & 0.0522 & 47.80\%\\
    7 & 0.0917 & 0.0487 & 51.29\%\\
    8 & 0.0934 & 0.0457 & 54.26\%\\
    9 & 0.0952 & 0.0432 & 56.85\%\\
    10 & 0.0970 & 0.0409 & 58.13\%\\
    11 & 0.0989 & 0.0388 & 61.18\%\\
    12 & 0.1007 & 0.0370 & 63.03\%\\
    13 & 0.1025 & 0.0353 & 64.72\%\\
    14 & 0.1042 & 0.0337 & 66.27\%\\
    15 & 0.1058 & 0.0323 & 67.71\%\\
    16 & 0.1074 & 0.0310 & 69.03\%\\
    \hline
    \end{tabular}
\end{table}

\begin{table}[thbp]
    \centering
    \caption{Data for the case described in~\Cref{sec:sa_case2} when $d_i=0.01$.}
    \label{tab:numerical_probabilities22}
    \begin{tabular}{|l|c|c|p{0.16\linewidth}|}
    \hline
    \rule{0pt}{2ex} $S$ & $P(H_i \leq d_i)$ & $P(H_i \leq d_i, H_i \leq A_{(1)})$ & Relative improvement\\
    \hline
    2 & 0.0086 & 0.0084 & 16.00\%\\
    3 & 0.0081 & 0.0078 & 21.86\%\\
    4 & 0.0077 & 0.0074 & 26.43\%\\
    5 & 0.0074 & 0.0070 & 29.90\%\\
    6 & 0.0072 & 0.0067 & 32.52\%\\
    7 & 0.0071 & 0.0065 & 34.53\%\\
    8 & 0.0070 & 0.0064 & 36.12\%\\
    9 & 0.0069 & 0.0063 & 37.40\%\\
    10 & 0.0069 & 0.0062 & 38.47\%\\
    11 & 0.0069 & 0.0061 & 39.38\%\\
    12 & 0.0069 & 0.0060 & 40.17\%\\
    13 & 0.0069 & 0.0059 & 40.88\%\\
    14 & 0.0069 & 0.0058 & 41.51\%\\
    15 & 0.0069 & 0.0058 & 42.10\%\\
    16 & 0.0069 & 0.0057 & 42.63\%\\
    \hline
    \end{tabular}
\end{table}

\begin{table}[thbp]
    \centering
    \caption{Data for the case described in~\Cref{sec:sa_case3} when $d_i=0.1$}
    \label{tab:numerical_probabilities31}
    \begin{tabular}{|l|c|c|p{0.16\linewidth}|}
    \hline
    \rule{0pt}{2ex} $S$ & $P(H_i \leq d_i)$ & $P(H_i \leq d_i, H_i \leq A_{(1)})$ & Relative improvement\\
    \hline
    2  & 0.0805192 & 0.0639507 & 36.05\%\\
    3  & 0.0813434 & 0.0594455 & 40.55\%\\
    4  & 0.0828135 & 0.0554341 & 44.57\%\\
    5  & 0.084711  & 0.0519237 & 48.08\%\\
    6  & 0.0868491 & 0.0488549 & 51.14\%\\
    7  & 0.0890942 & 0.0461509 & 53.85\%\\
    8  & 0.0913594 & 0.0437418 & 56.26\%\\
    9  & 0.0935922 & 0.0415718 & 58.43\%\\
    10 & 0.0957621 & 0.0395984 & 60.40\%\\
    11 & 0.0978524 & 0.0377901 & 62.21\%\\
    12 & 0.0998548 & 0.036123  & 63.88\%\\
    13 & 0.101766  & 0.0345786 & 65.42\%\\
    14 & 0.103586  & 0.0331427 & 66.86\%\\
    15 & 0.105317  & 0.0318034 & 68.20\%\\
    16 & 0.106961  & 0.0305513 & 69.45\%\\
    \hline
    \end{tabular}
\end{table}

\begin{table}[thbp]
    \centering
    \caption{Data for the case described in~\Cref{sec:sa_case3} when $d_i=0.01$.}
    \label{tab:numerical_probabilities32}
    \begin{tabular}{|l|c|c|p{0.16\linewidth}|}
    \hline
    \rule{0pt}{2ex} $S$ & $P(H_i \leq d_i)$ & $P(H_i \leq d_i, H_i \leq A_{(1)})$ & Relative improvement\\
    \hline
    2  & 0.00714992 & 0.00696878 & 30.31\% \\
    3  & 0.00694662 & 0.00670019 & 33.00\% \\
    4  & 0.00680191 & 0.00648492 & 35.15\% \\
    5  & 0.00670619 & 0.00631588 & 36.84\% \\
    6  & 0.00664762 & 0.00618278 & 38.17\% \\
    7  & 0.00661575 & 0.00607607 & 39.23\% \\
    8  & 0.00660267 & 0.00598828 & 40.12\% \\
    9  & 0.00660279 & 0.00591401 & 40.86\% \\
    10 & 0.00661227 & 0.00584953 & 41.50\% \\
    11 & 0.0066285  & 0.00579226 & 42.08\% \\
    12 & 0.0066497  & 0.00574043 & 42.60\% \\
    13 & 0.0066746  & 0.00569277 & 43.07\% \\
    14 & 0.00670233 & 0.00564842 & 43.52\% \\
    15 & 0.00673222 & 0.00560671 & 43.93\% \\
    16 & 0.00676381 & 0.00556716 & 44.33\% \\
    \hline
    \end{tabular}
\end{table}

\subsection{Supervision requirements across a traffic network}
\label{subsec:traffic_network_arrays}

\textcolor{black}{
See Figures~\ref{fig:network_req_sup_array}, \ref{fig:network_num_avs_array}, \ref{fig:network_conf_noncoop_array} and~\ref{fig:network_conf_coop_array}. These illustrate similar trends for a variety of configurations.}

\begin{figure*}[tbhp]
    \centering
    \includegraphics[width=0.85\textwidth]{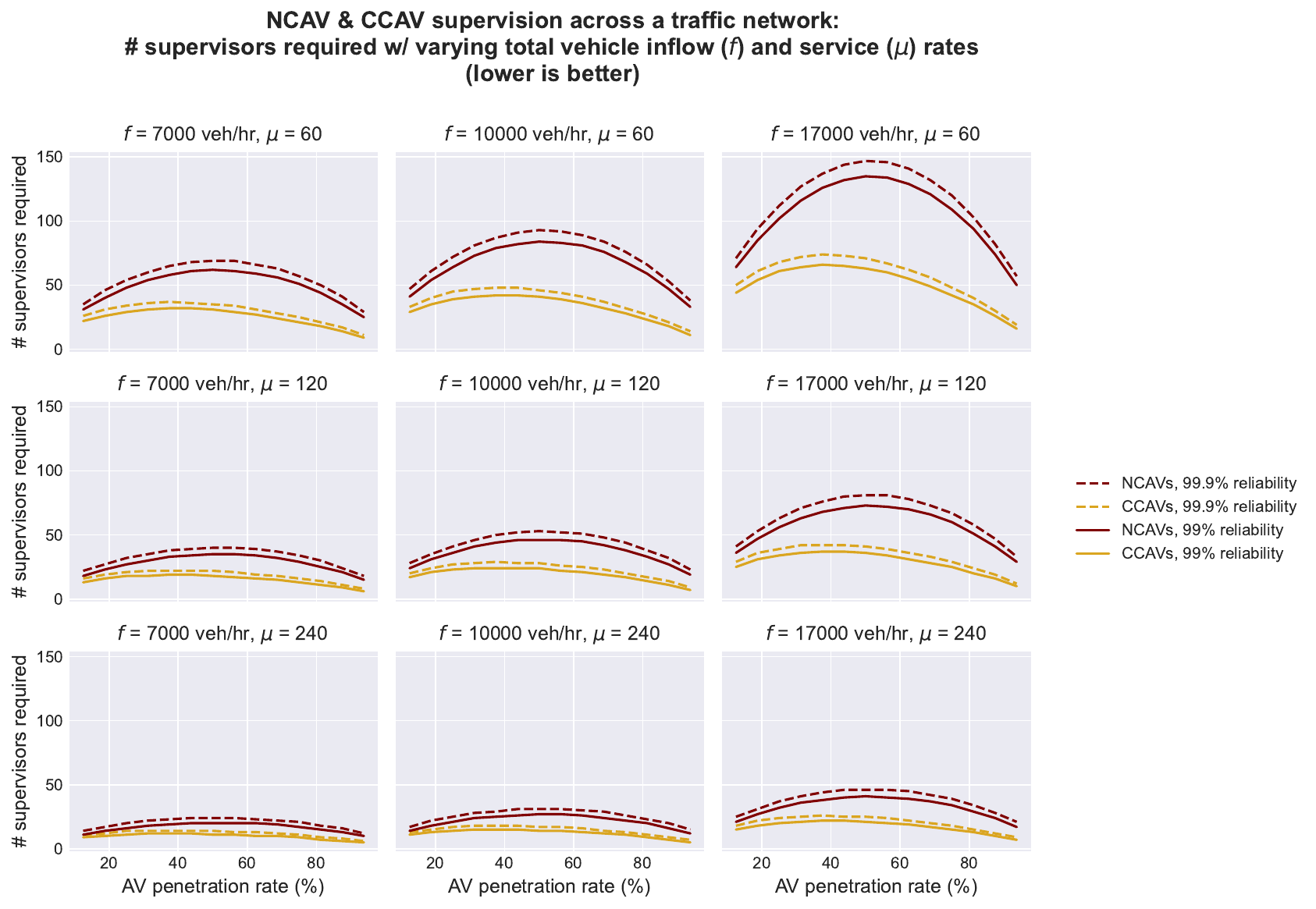}
    \caption{\textcolor{black}{Results described in~\Cref{fig:network_req_sup_and_avs_single} for a variety of total vehicle inflow rates $f$ and service rates $\mu$. The trends are similar across configurations.}}
    \label{fig:network_req_sup_array}
\end{figure*}

\begin{figure*}[tbhp]
    \centering
    \includegraphics[width=0.85\textwidth]{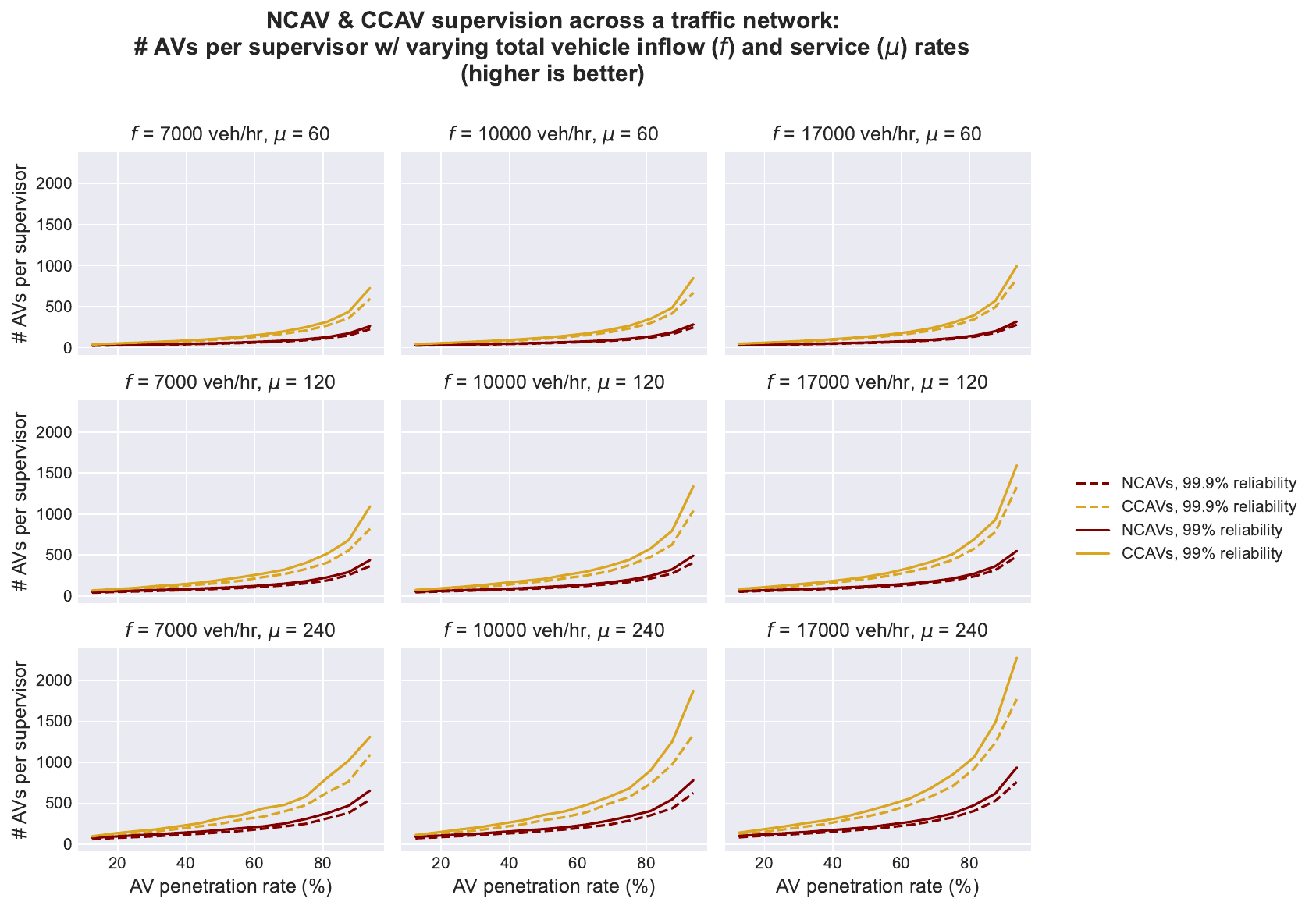}
    \caption{Counterintuitively, due to the benefits of CCAVs, supervisors can monitor substantially more AVs as they become more prevalent in the system.}
    \label{fig:network_num_avs_array}
\end{figure*}

\begin{figure*}[tbhp]
    \centering
    \includegraphics[width=0.85\textwidth]{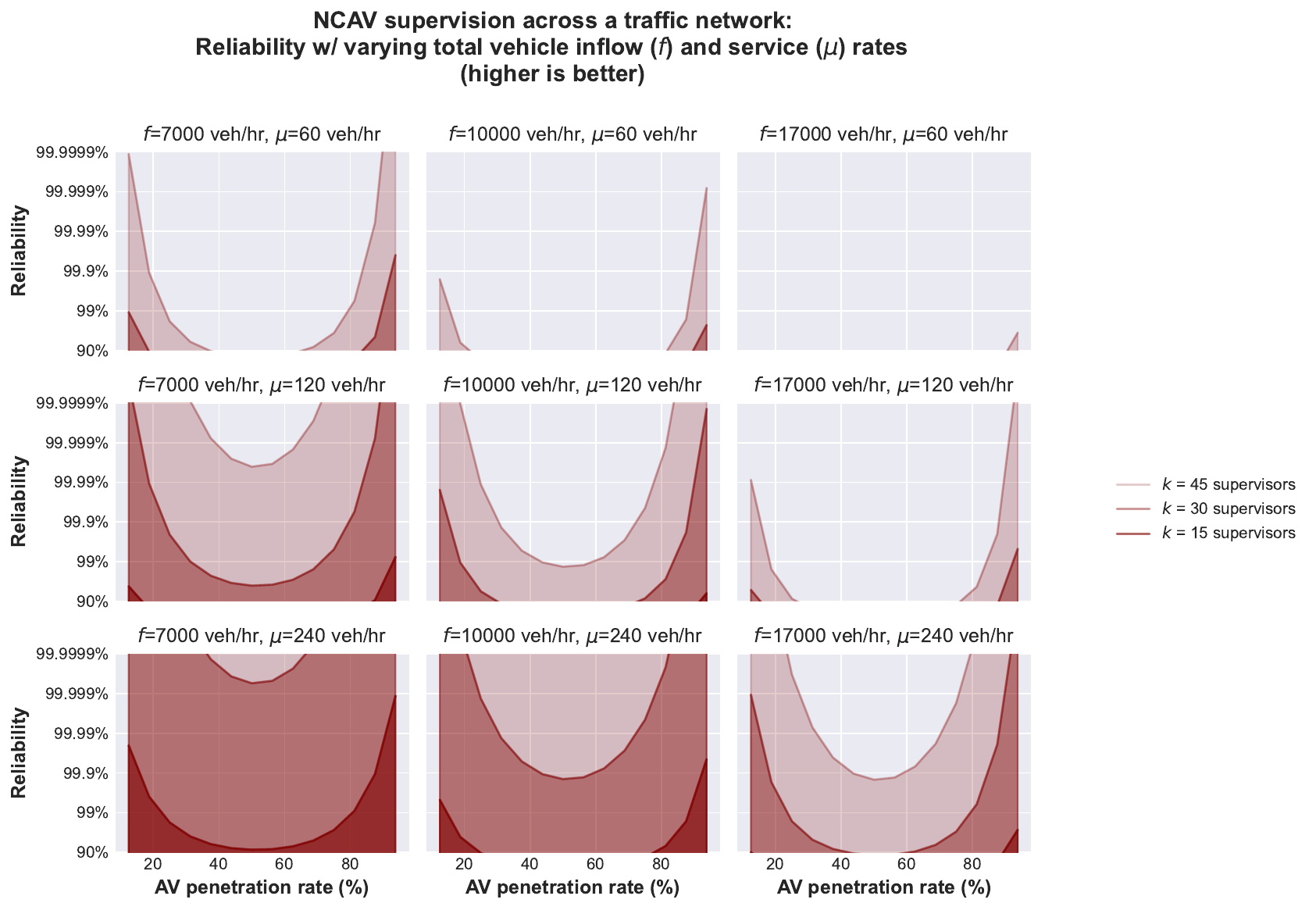}
    \caption{\textcolor{black}{Reliability} thresholds NCAVs can achieve at various task arrival and service rates. Trends are similar across configurations.}
    \label{fig:network_conf_noncoop_array}
\end{figure*}

\begin{figure*}[tbhp]
    \centering
    \includegraphics[width=0.85\textwidth]{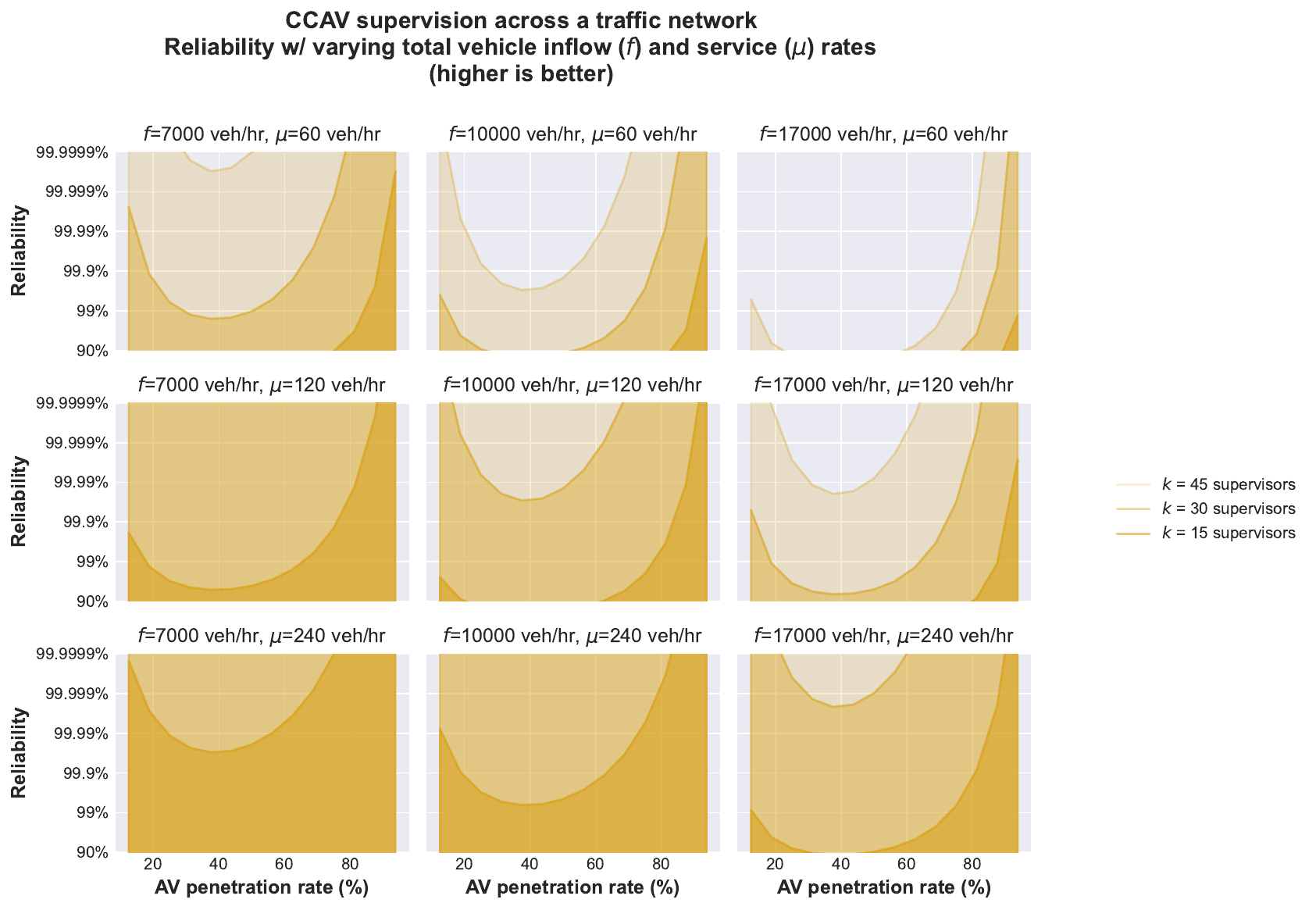}
    \caption{Comparing this figure with the one above---while making special note of the logarithmic y-axis---illustrates how CCAVs enable a fixed number of supervisors to achieve a substantially higher \textcolor{black}{reliability} threshold.}
    \label{fig:network_conf_coop_array}
\end{figure*}

\subsection{Supervision requirements for a single rotary}
See~\Cref{fig:rotary_conf_both}. 
\textcolor{black}{The single rotary trends mirror those from the much larger traffic network.}



\begin{figure*}[tbhp]
    \centering
    \includegraphics[width=0.85\textwidth]{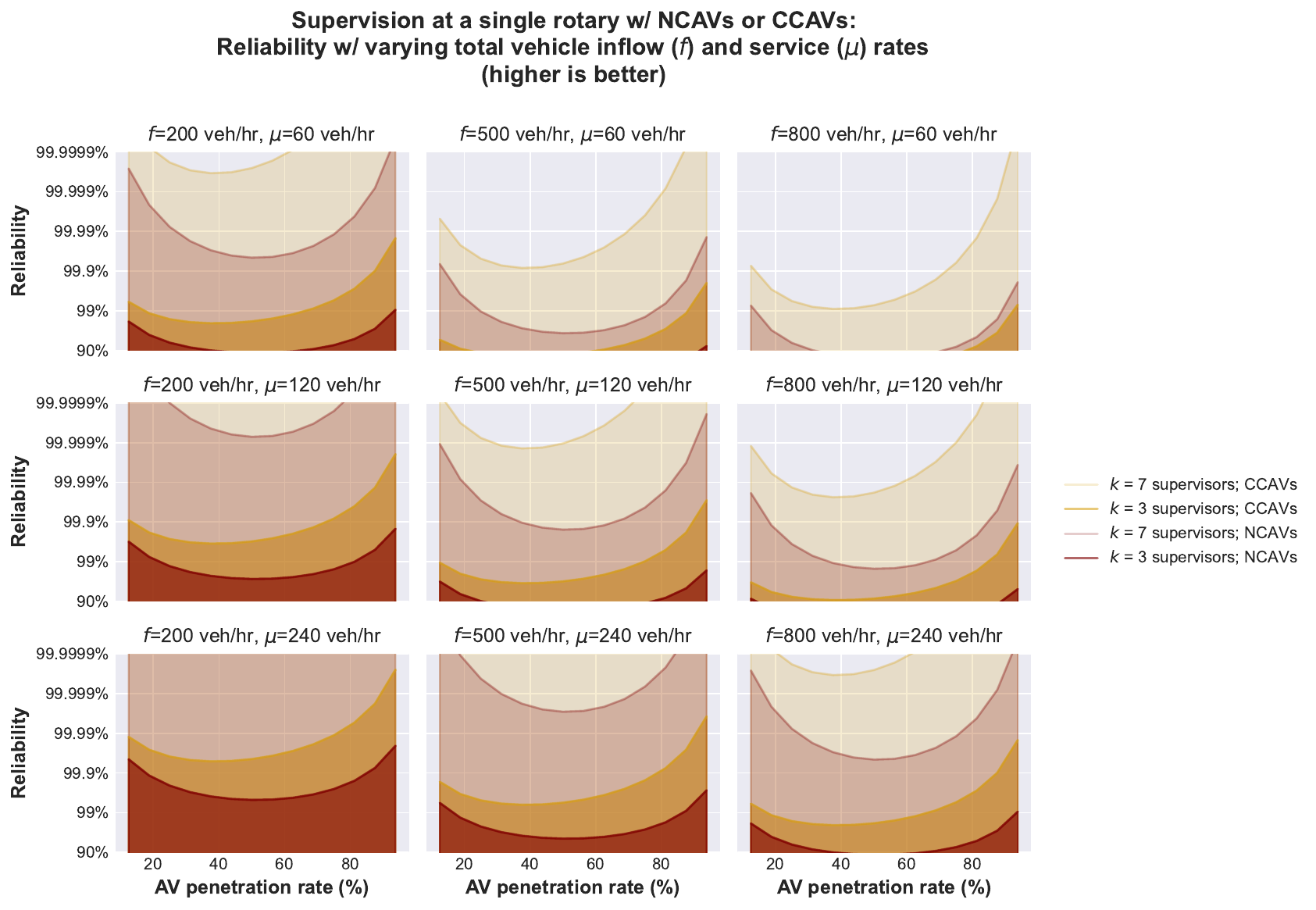}
    \caption{The \textcolor{black}{reliability} behavior is the same for isolated rotaries as for the larger system.}
    \label{fig:rotary_conf_both}
\end{figure*}



}


 



 
\vspace{11pt}

\begin{IEEEbiography}[{\includegraphics[width=1in,height=1.25in,clip,keepaspectratio]{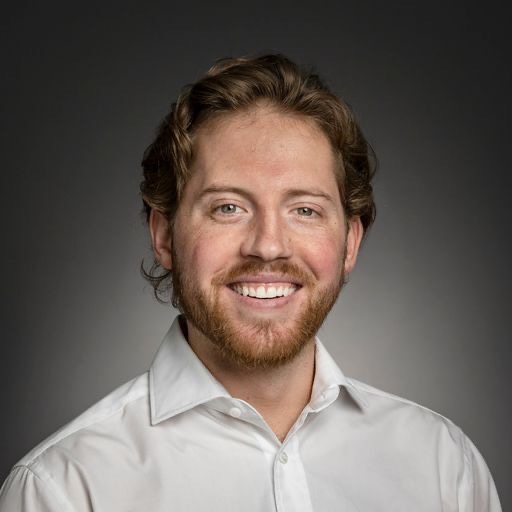}}]{Cameron Hickert}
earned a B.S. in physics from the University of Denver and an M.E. in computational science and engineering from Harvard University. He is currently a Ph.D. student in Social and Engineering Systems at MIT. His research interests include safe autonomy, reinforcement learning, and distributed cyber-physical systems.
\end{IEEEbiography}
\begin{IEEEbiography}[{\includegraphics[width=1in,height=1.25in,clip,keepaspectratio]{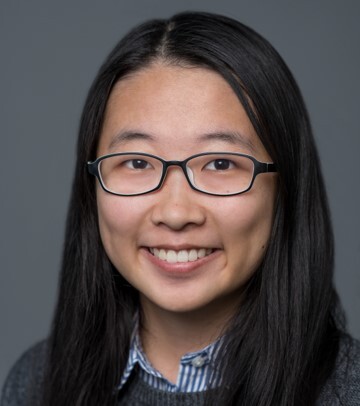}}]{Sirui Li}
received her B.S. degree with majors in computer science and mathematics from Washington University in St. Louis, MO, USA, in 2019. She is currently working toward her Ph.D. degree in Social and Engineering Systems at MIT, Cambridge, MA, USA. Her research interests include areas of machine learning for combinatorial optimization and control analysis for transportation systems.
\end{IEEEbiography}
\begin{IEEEbiography}[{\includegraphics[width=1in,height=1.25in,clip,keepaspectratio]{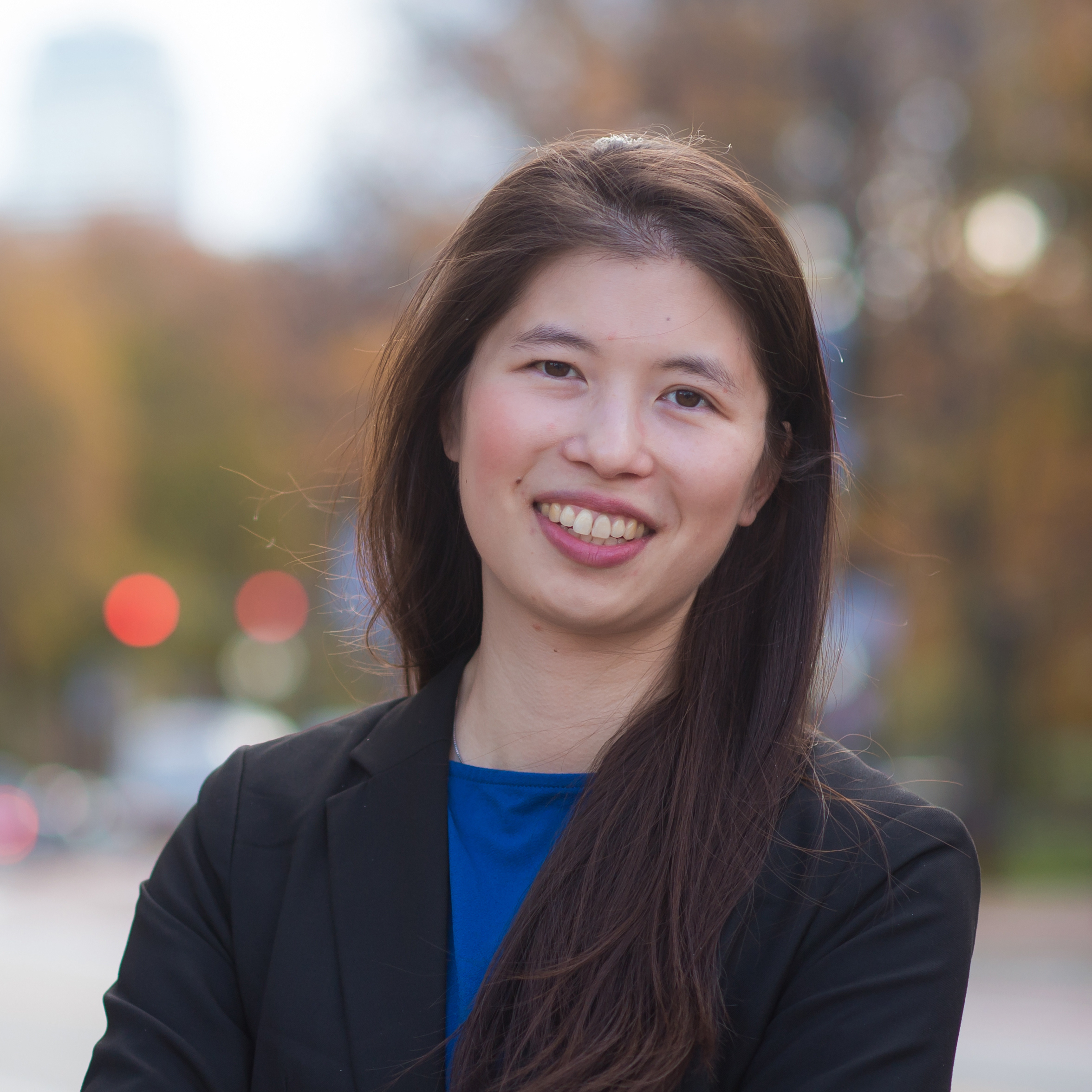}}]{Cathy Wu}
is an Assistant Professor at MIT in LIDS, CEE, and IDSS. She holds a Ph.D. from UC Berkeley and B.S. and M.Eng. from MIT, all in EECS, and completed a Postdoc at Microsoft Research. Her interests are at the intersection of machine learning, control, and mobility. Her recent research focuses on how learning-enabled methods can better cope with the complexity, diversity, and scale of control and operations in mobility systems. She is interested in developing principled computational tools to enable reliable decision-making in sociotechnical systems.
\end{IEEEbiography}



\vfill

\end{document}